\begin{document}

\title{Scalar Function Topology Divergence: Comparing Topology of 3D Objects} 

\titlerunning{Scalar Function Topology Divergence}

\author{Ilya Trofimov\inst{1}\orcidlink{0000-0002-2961-7368} \and
Daria Voronkova \inst{1,2} \and
Eduard Tulchinskii\inst{1,4} \and
Evgeny Burnaev\inst{1,2} \and
Serguei Barannikov\inst{1,3}}

\authorrunning{I.~Trofimov et al.}

\institute{
Skolkovo Institute of Science and Technology, Moscow, Russia
\email{\{ilya.trofimov,darya.voronkova,eduard.tulchinskiy,e.burnaev, s.barannikov\}@skoltech.ru}
\and
AIRI, Moscow, Russia\\
\and 
CNRS, IMJ, Paris Cité University, France
\and
AI Foundation and Algorithm Lab, Moscow, Russia
}

\maketitle

\begin{abstract}
We propose a new topological tool for computer vision - Scalar Function Topology Divergence (SFTD), which measures the dissimilarity of multi-scale topology between sublevel sets of two functions having a common domain. Functions can be defined on an undirected graph or Euclidean space of any dimensionality.
Most of the existing methods for comparing topology are based on Wasserstein distance between persistence barcodes and they don't take into account the localization of topological features.
The minimization of SFTD ensures that the corresponding topological features of scalar functions are located in the same places.
The proposed tool provides useful visualizations depicting areas where functions have topological dissimilarities.
We provide applications of the proposed method to 3D computer vision. In particular, experiments demonstrate that SFTD as an additional loss improves the reconstruction of cellular 3D shapes from 2D fluorescence microscopy images, and helps to identify topological errors in 3D segmentation. Additionally, we show that SFTD outperforms Betti matching loss in 2D segmentation problems. The code is publicly available:
\url{https://github.com/IlyaTrofimov/SFTD}.
\keywords{Topological Data Analysis \and 3D Computer Vision \and Geometric Deep Learning}
\end{abstract}

\section{Introduction}


A common situation in computer vision is a structural prediction defined over~$\mathbb{R}^n$. Examples include image segmentation \cite{ronneberger2015u, hu2019topology, hu2021topology, stucki2023topologically} and 3D shape reconstruction 
\cite{wu2017marrnet, waibel2022capturing, waibel2022shapr} problems. 
However, in most of the cases the comparison between the prediction and a ground truth is done pixel(voxel)-wise by metrics like Dice score, cross-entropy, etc. 
These metrics are not capable of providing detailed topological correctness, i.e. preserving multi-scale shape patterns: cavities, holes, connectivity patterns, etc. 
Typically, predictions involve scalar functions \mbox{$f: \mathbb{R}^n \to \mathbb{R}$} and the final shape of an object is defined by thresholding $f(x)$.

There is a need for topological-aware comparison metrics for scalar functions, which take into account the topology of sublevel sets $f^{-1}((-\infty, \varepsilon])$ (the analysis for superlevel sets is similar). 
Existing tools for this problem don't take into account the localization of topological features \cite{hu2019topology, waibel2022capturing} or are limited to 2D space \cite{stucki2023topologically}. 
Our contributions are as follows:

\begin{enumerate}
\item We introduce new topological tools: F-Cross-Barcodes and Scalar Function Topology Divergence (SFTD), for a multi-scale topology-aware comparison of scalar functions by tracking differences in their sublevel sets topology.
\item A distinctive feature of F-Cross-Barcodes and SFTD is that they takes into account the localization of topological features;
\item We describe a visualization technique for F-Cross-Barcodes: a technique highlights with points areas where differences in topological features are localized;
\item We propose an algorithm for SFTD differentiation; 
\item By doing computational experiments, we show that SFTD as an additional loss term improves 3D shape reconstruction, and helps to identify topological errors in 3D segmentation.
\item We also show that SFTD outperforms Betti matching loss in terms of topological accuracy in 2D segmentation while being $\approx35$ times faster, see Appendix \ref{app:2d_segmentation}.
\end{enumerate}

\section{Related work}

Tools of topological data analysis find application in various machine learning problems: evaluation of deep generative models \cite{barannikov2021manifold}, comparison of neural representations \cite{barannikov2021representation}, dimensionality reduction \cite{moor2020topological,trofimov2023learning}, 
graph neural networks \cite{horn2021topological, yan2021link, carriere2020perslay}, self-supervised learning \cite{luo2023improving}, molecular data analysis \cite{demir2022todd}, analysis of loss functions in deep learning \cite{barannikov2023barcodes, barannikov2020topological}, spatial-temporal graph neural network tasks \cite{chen2022tamp, chen2021z}, crowd localization \cite{abousamra2021localization}, to name a few. 


Several works used topological tools to enhance image segmentation \cite{hu2019topology, hu2021topology, stucki2023topologically, gupta2024topology}. 
However, proposed algorithms for comparison ground truth and predicted images either don't take into account localization of topological features \cite{hu2019topology}, or are limited to 2D \cite{stucki2023topologically}.
A method by \cite{hu2021topology} highlights in images topologically critical and error-prone structures using discrete Morse theory, and \cite{gupta2024topology} proposes a method for inferring structure-aware uncertainty. 
The importance of topological interactions for multi-class medical image segmentation is shown in \cite{gupta2022learning}.








\section{Preliminaries}

\begin{wrapfigure}{r}{0.35\textwidth}
\vskip-0.7in
\centering
\includegraphics[width=0.35\textwidth]{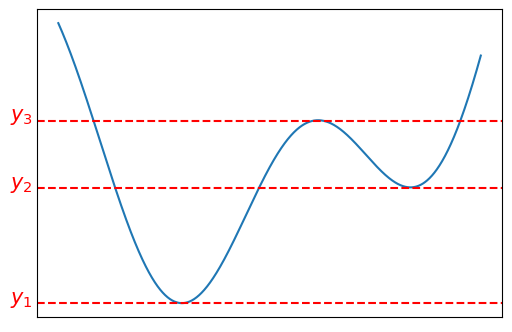}
  \caption{1D function and its sublevel sets.}
\label{fig:1d_function}
\vskip-.8in
\end{wrapfigure}

We study scalar functions $f: \mathbb{R}^n \to \mathbb{R}$ by assessing the topology of its sublevel sets. Consider a 1D function depicted in Figure \ref{fig:1d_function}. It has two local minima $y_1, y_2$ and a local maximum $y_3$, such that $y_1 < y_2 < y_3$. One can study sublevel sets $X_\varepsilon = f^{-1}((-\infty, \varepsilon])$. For the scalar function $f$, changes of $X_\varepsilon$ for increasing $\varepsilon$ can be intuitively understood as a ``filling the pool'', see Figure \ref{fig:1d_function}.
There are four distinct situations: 
\begin{enumerate}
\item For $\varepsilon < y_1$, $X_\varepsilon = \varnothing$;
\item For $y_1 \le \varepsilon < y_2$, $X_\varepsilon$ is a single segment;
\item For $y_2 \le \varepsilon < y_3$, $S_\varepsilon$ consists of 2 segments;
\item Finally, for $\varepsilon \ge y_3$, $X_\varepsilon$ is a single segment.
\end{enumerate}

Note, that changes in topology of $X_\varepsilon$ happen when $\varepsilon$ is a critical point of $f$. In the general case, smooth scalar functions defined on smooth manifolds are studied by Morse theory \cite{barannikov2023barcodes}. 
In practice, we assess functions through their values on a finite set of points, e.g. a lattice (square grid graph) in $\mathbb{R}^n$. 


\subsection{Scalar functions defined on a graph}

In the most general form, we are interested in topology-aware comparison of scalar functions $f: \mathcal{V} \to \mathbb{R}$ defined on vertices $\mathcal{V}$ of an undirected graph $\mathcal{G} = (\mathcal{V}, \mathcal{E})$. 
Specifically, we study topological features of subgraphs \mbox{$\{v \in \mathcal{V} \; | \; f(v) \le \varepsilon\}$}: connected components, cycles, etc. 

For each clique $C$ in the graph $\mathcal{G}$, we define a \textit{filtration function}
$T(C) = \max_{i \in C} f(i)$.
By a \textit{clique complex} $\mathcal{C}$, we mean a set of cliques where each sub-clique also belongs to a complex\footnote{Every clique complex is an abstract simplicial complex, but the reserve is not true.}.
For some $\varepsilon$, $T(C)$ defines a clique complex $\mathcal{C}_{j} =\{C \; | \; T(C) \le \varepsilon\}$, where $f(j) = \varepsilon$.
Topological features for a subgraph \mbox{$\{v \in \mathcal{V} \; | \; f(v) \le \varepsilon\}$} are precisely expressed by \textit{homology groups}.
Overall, the filtration function $T(C)$ defines the \textit{lower-star} filtration of the graph's clique complex:
$$
\varnothing \subseteq \mathcal{C}_0 \subseteq \mathcal{C}_1 \subseteq \ldots \subseteq \mathcal{C}_n,
$$
where $\mathcal{C}_n$ is a clique complex of $\mathcal{G}$. The filtration is said to be induced by the vertex function $f$.


A given topological feature ``appears'' at some $\varepsilon_1$ and ``dies'' at some $\varepsilon_2$. These topological features, together with their birth and death times can be calculated via tools of topological data analysis.
The multi-set of the corresponding intervals $(\varepsilon_1, \varepsilon_2)$ is called the \textit{persistence barcode}, which depicts topological features of the graph for all scales. The whole theory is dubbed \textit{persistent homology} 
\cite{barannikov1994framed,chazal2017introduction}. 


\section{Method}
\label{sec:method}

\subsection{Scalar functions defined on a graph}\label{sec:graphs}

\begin{figure}[tp]
\centering
\begin{subfigure}{.49\linewidth}
  \centering
  \includegraphics[width=0.45\linewidth]{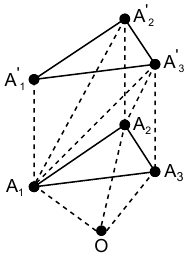}
  \caption{The doubled graph $\tilde{\mathcal{G}}$.}
  \label{fig:doubled_graph}
\end{subfigure}
\hfill
\begin{subfigure}{.49\linewidth}
  \centering
  \includegraphics[width=0.58\linewidth]{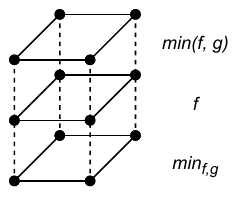}
  \caption{The extended $n+1$ dimensional lattice $\tilde{\mathcal{L}}$.}
  \label{fig:3lattice}
\end{subfigure}
\caption{A visualization of $\tilde{\mathcal{G}}$, $\tilde{\mathcal{L}}$.}
\end{figure}


To compare topological features of sublevel sets of functions $f, g: \mathcal{G} \to \mathbb{R}$ we construct a graph $\tilde{\mathcal{G}} = (\tilde{\mathcal{V}}, \tilde{\mathcal{E}})$ with doubled set of vertices from $\mathcal{G}$ and one extra vertex $O$, see Figure \ref{fig:doubled_graph}.
Formally, $\tilde{\mathcal{V}}=\{A_1, \ldots, A_n, A'_1, \ldots, A'_n, O\}$. 
Edges $\tilde{\mathcal{E}}$ are defined as follows:
\begin{gather*}
(A_i, A_j) \in \mathcal{E} \Rightarrow (A_i, A_j) \in \tilde{\mathcal{E}}, (A'_i, A'_j) \in \tilde{\mathcal{E}}  \\
(A_i, A_j) \in \mathcal{E},\; i < j \Rightarrow (A_i, A'_j) \in \tilde{\mathcal{E}} \\
(O, A_i) \in \tilde{\mathcal{E}}, (A_i, A'_i) \in \tilde{\mathcal{E}} \; \text{for all} \; i
\end{gather*}
The vertex function $\tilde{f}$ is defined as follows: 
\begin{gather*}
\tilde{f}(A_i) = f(i) \\
\tilde{f}(A'_i)= \min(f(i), g(i)), \tilde{f}(O) = {\min}_{f,g},
\end{gather*}
where $\min_{f, g}$ is the minimum value of both $f, g$ in the nodes of the graph $\mathcal{G}$. 

\textbf{Definition.} The \textit{F-Cross-Barcode}$_k(f, g)$ for two scalar function $f, g$ defined on vertices of an undirected graph $\mathcal{G}$ is the $k$-th persistence barcode for the doubled graph $\tilde{\mathcal{G}}$ and the lower-star filtration induced by $\tilde{f}$. 


To put it simply, each interval in the \textit{F-Cross-Barcode}$_k(f, g)$ denotes a range of functions’ sub-level sets where a given topological feature exists in the function $f$ but doesn’t exist in the function $g$, see further Appendix \ref{sec:stability}.
Technically, the desired persistence barcode is calculated via  \texttt{giotto-ph} library \cite{burella2021giottoph}. For this software, values of a filtration function must be defined on vertices and edges.
Algorithm \ref{alg:sfd_barcodes} summarizes the aforementioned.

\begin{algorithm}[t]
\caption{F-Cross-Barcode$_k$ (scalar function $k-$th cross-barcode)}
\label{alg:sfd_barcodes}
   
\begin{algorithmic}
\STATE \hskip-1em {\bfseries Input:} Scalar functions $f,g$ defined on vertices of a graph $\mathcal{G} = (\mathcal{V}, \mathcal{E})$. $|\mathcal{V}| = n$.
\STATE \hskip-1em {\bfseries Require:} \texttt{PB}$_k(\cdot)$ - an algorithm to calculate $k$-th persistence barcode from a matrix containing values of a filtration function on a graph: on vertices - on a diagonal, on edges - off a diagonal.
\vskip.05in


\STATE {1.} Create a $n\times n$ matrix $F$ for the function $f$:

\quad $F_{ij} = f(i)$ for $i = j$, $\max(f(i), f(j))$ for $i \neq j$ and $(i, j) \in \mathcal{E}$, $+\infty$  otherwise.
\STATE {2.} Initialize $c_{+\infty}$ to a vector of $n$ elements, all equal to $+\infty$;
\STATE {3.} Create $F_{+}$, a matrix derived from $F$ with a lower-diagonal part replaced by $+\infty$;
\STATE {4.} Create $F_{diag}$, a vector containing the diagonal elements of $F$;
\STATE {5.} Create $\min(F,G)$, a $n\times n$ matrix:
\; $\min(F,G)_{ij} = \min(f(i), g(i))$ for $i = j$,

\quad $\max(\min(f(i), g(i)), \min(f(j), g(j)))$ for $i \neq j$ and $(i, j) \in \mathcal{E}$, $+\infty$  otherwise.
\STATE {6.} Compute $\min_{F,G}$, the minimum value of all elements in $F$ and $G$;
\STATE {7.} Compute the $(2n+1)\times(2n+1)$ block matrix $m$ as follows:
$$
m\gets \begin{pmatrix}
      \min(F,G)              & F_+^\intercal      & c_{+\infty}^\intercal \\\
      F_+                    & F                  & F_{diag}^\intercal \\\ 
      c_{+\infty}  & F_{diag} & \min_{F, G}
         \end{pmatrix}
$$
\STATE {8.} Compute: F-Cross-Barcode$_k(f,g) \gets$ \texttt{PB}$_k(m)$
\vskip.05in
\STATE \hskip-1em{\bfseries Return:} F-Cross-Barcode$_k(f,g)$, a multi-set of (``birth'', ``death'') intervals.
\end{algorithmic}

\end{algorithm}

\subsection{Scalar functions defined on a lattice}
\label{sec:cubical_complex}


A frequent situation is a function having a domain in $\mathbb{R}^n$.
We assume that the function is approximated by its restriction on a sufficiently small grid. 
Algorithm~\ref{alg:sfd_barcodes} can also be applied in this case after a space triangulation. However, the number of simplices grows very quickly in high dimensions (see discussion in \cite{wagner2011efficient}), and Algorithm \ref{alg:sfd_barcodes} has a high computational cost. That is why, we consider \textit{cubical complexes} \cite{kaczynski2004computational} instead of simplicial complexes. Specifically, we consider a regular square grid graph ($n$-dimensional lattice) and use values of functions in nodes of this grid: $f(j_1, \ldots, j_n), g(j_1, \ldots, j_n)$.
To compare two function $f,g$, we create an extended $n+1$ dimensional lattice $\tilde{\mathcal{L}}$ with one extra dimension of size $3$, see Figure \ref{fig:3lattice}. The vertex function $\tilde{f}: \tilde{\mathcal{L}} \to \mathbb{R}$ is defined as follows:
\begin{gather*}
\tilde{f}(i,j_1, \ldots, j_n) =   
\begin{cases}
  \min_{f, g} & \!\! \text{for } i = 0\\    
  f(j_1, \ldots, j_n) & \!\! \text{for } i = 1 \\
  \min(f(j_1, \ldots, j_n), g(j_1, \ldots, j_n)) & \!\! \text{for } i = 2
\end{cases}
\end{gather*}

Similarly to functions on graphs, \textit{F-Cross-Barcode}$_k(f, g)$ for functions on lattices is defined as the $k-$th persistence barcode of the filtered cubical complex with the filtration induced by the vertex function $\tilde{f}$. 
We use the GUDHI\footnote{\url{https://gudhi.inria.fr/}} library which implements efficient algorithms for cubical complexes \cite{wagner2011efficient}.
Note that \mbox{\textit{F-Cross-Barcode}$_k(f, g)$} for functions restricted on a lattice and a triangulation of the same space are not equivalent, but would be similar.


\subsection{Scalar Function Topology Divergence}

The \textit{F-Cross-Barcode}$_k(f, g)$ depicts topological differences of functions' sublevel sets at multiple scales. The \textit{F-Cross-Barcode}$_k(f, g)$, as a regular barcode, is a multi-set of intervals. One requires some vectorization technique to inject it into a machine learning pipeline. If one needs to describe a discrepancy between $f, g$ by an integral numerical characteristic, we propose a \textit{scalar function topology divergence}:
\begin{equation}
\label{eq:sftd}
\mbox{SFTD}_k(f, g) = \sum_{(b, d) \in \text{F-Cross-Barcode}_k(f, g)}{||d - b||^p},
\end{equation}
that is the sum of lengths of intervals from the \textit{F-Cross-Barcode}$_k(f, g)$ to $p$-th power, where typically $p=1,2$.
SFTD is not symmetric, and we often use an average of $\mbox{SFTD}_k(f, g)$ and $\mbox{SFTD}_k(g, f)$ in experiments below.
For different dimensions $k$, $\mbox{SFTD}_k(f, g)$ can be used either separately, or the sum can be taken to reflect discrepancies in all types of topological features (connected components, cycles, voids, etc.). Computational complexity of SFTD is discussed in Appendix \ref{app:complexity}. 

\subsection{SFTD basic properties}
\begin{proposition} 
\label{prop_std}
If $SFT\!D_k(f, g) = SFT\!D_k(g, f)=0$ for all $k \geq 0$, then the sublevel sets barcodes of functions  $f$ and $g$ are the same in any degree. Moreover, in this case their topological features are located in the same places: 
the inclusions $ f^{-1}((-\infty, \varepsilon])\subseteq  (\min(f,g))^{-1}((-\infty, \varepsilon])$,  $ g^{-1}((-\infty, \varepsilon])\subseteq  (\min(f,g))^{-1}((-\infty, \varepsilon])$, induce homology isomorphisms for any threshold~$\varepsilon$.
\end{proposition}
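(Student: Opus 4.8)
The plan is to read the F-Cross-Barcode homologically and then turn the hypothesis $\mathrm{SFTD}_k=0$ into a vanishing statement. Write $X_\varepsilon = f^{-1}((-\infty,\varepsilon])$, $Y_\varepsilon = g^{-1}((-\infty,\varepsilon])$ and $Z_\varepsilon = (\min(f,g))^{-1}((-\infty,\varepsilon])$, so that $X_\varepsilon \subseteq Z_\varepsilon$ and $Y_\varepsilon \subseteq Z_\varepsilon$ for every threshold $\varepsilon$. The doubled-graph construction of Section~\ref{sec:graphs} keeps the $f$-complex on the $A$-vertices, the $\min(f,g)$-complex on the $A'$-vertices, glues them by the edges $(A_i,A'_i)$, and cones the $A$-side off through the vertex $O$. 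First I would record the standard mapping-cone fact that this realizes, filtration value by filtration value, the relative homology of the pair, so that the degree-$k$ persistence module underlying F-Cross-Barcode$_k(f,g)$ is isomorphic to $\{H_k(Z_\varepsilon, X_\varepsilon)\}_\varepsilon$ with its inclusion-induced structure maps; symmetrically, F-Cross-Barcode$_k(g,f)$ corresponds to $\{H_k(Z_\varepsilon, Y_\varepsilon)\}_\varepsilon$, using $\min(f,g)=\min(g,f)$. This identification is the geometric heart of the construction and is exactly what the informal reading after the Definition and Appendix~\ref{sec:stability} assert, so I would treat it as given.

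Next I would convert the hypothesis into vanishing of relative homology. Since $\mathrm{SFTD}_k(f,g)=\sum \|d-b\|^p$ with $p\ge 1$ is a sum of nonnegative terms, it is zero precisely when every interval of F-Cross-Barcode$_k(f,g)$ has coincident birth and death, i.e. the barcode has no bar of positive length. For a finite lower-star filtration the pointwise dimension of the relative homology at a value $\varepsilon$ equals the number of bars strictly containing $\varepsilon$, so the absence of positive-length bars forces $H_k(Z_\varepsilon, X_\varepsilon)=0$ at every regular $\varepsilon$, and hence at all $\varepsilon$ by right-continuity of the filtration. The hypothesis therefore yields $H_k(Z_\varepsilon, X_\varepsilon)=0$ for all $\varepsilon$ and all $k$, and symmetrically $H_k(Z_\varepsilon, Y_\varepsilon)=0$.

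The second conclusion then drops out of the long exact sequence of the pair: for fixed $\varepsilon$ the portion $\cdots \to H_k(X_\varepsilon) \to H_k(Z_\varepsilon) \to H_k(Z_\varepsilon, X_\varepsilon) \to \cdots$ has all relative terms zero, so the inclusion $X_\varepsilon \hookrightarrow Z_\varepsilon$ induces isomorphisms $H_k(X_\varepsilon)\cong H_k(Z_\varepsilon)$ in every degree, and likewise $H_k(Y_\varepsilon)\cong H_k(Z_\varepsilon)$; this is exactly the ``located in the same places'' claim. To get the first conclusion I would upgrade these pointwise isomorphisms to an isomorphism of persistence modules. The point is naturality: for $\varepsilon \le \varepsilon'$ the square of space-level inclusions $X_\varepsilon \hookrightarrow X_{\varepsilon'}$, $Z_\varepsilon \hookrightarrow Z_{\varepsilon'}$, $X_\bullet \hookrightarrow Z_\bullet$ commutes, hence commutes on homology, so the inclusion-induced maps assemble into an isomorphism of persistence modules $\{H_k(X_\varepsilon)\}_\varepsilon \cong \{H_k(Z_\varepsilon)\}_\varepsilon \cong \{H_k(Y_\varepsilon)\}_\varepsilon$. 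Isomorphic persistence modules have identical barcodes, so $f$ and $g$ have the same sublevel-set barcode in every degree.

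The main obstacle, if one is unwilling to simply cite it, is the first step: rigorously matching the combinatorial doubled-graph/cone filtration with the persistence module $\{H_k(Z_\varepsilon, X_\varepsilon)\}_\varepsilon$, including the care that the union clique complex built by the matrix $m$ of Algorithm~\ref{alg:sfd_barcodes} computes the same homology as the sublevel subgraph of $\min(f,g)$. Once that identification is secured, everything else is a formal long-exact-sequence and naturality argument; the only remaining subtlety is to carry it out at the persistence-module level rather than pointwise, since that is what licenses the barcode equality in the first conclusion.
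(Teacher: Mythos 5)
Your proposal is correct and follows essentially the same route as the paper: the paper proves the statement by establishing the exact sequence of Theorem~\ref{thr:basic} (via the distinguished triangle of Proposition~\ref{prop:triangl}), which is precisely your identification of the F-Cross-Barcode module with the relative homology $H_k(Z_\varepsilon,X_\varepsilon)$ together with the long exact sequence of the pair, and then reads off the isomorphisms from the vanishing of the middle terms. The only difference is that you make explicit the naturality argument upgrading pointwise isomorphisms to an isomorphism of persistence modules (hence equality of barcodes), a step the paper leaves implicit.
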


\begin{proposition}\label{prop:stab}\textbf{(SFTD Stability)}

  (a) For any scalar functions $f$, $f'$, $g$, 
 $g'$ on a grid, or on a graph vertex set: 
  \vskip-0.25in
\begin{multline}
    d_B(\text{F-Cross-Barcode} _ k(f,g), \text{F-Cross-Barcode} _ k(f',g')) \leq \\ \max(\max _ {i} \lvert f(i)-f'(i)\rvert ,\max _ {i} \lvert g(i)-g'(i)\rvert ). \nonumber
\end{multline}
 \vskip-0.15in
(b) For any  pair of scalar functions $f$, $f'$: 
 \vskip-0.1in
 $$\lVert \text{F-Cross-Barcode}_k(f,f')\rVert_B \leq \max_{i} \lvert f(i)-f'(i)\rvert,$$ 
\vskip-0.1in
\end{proposition}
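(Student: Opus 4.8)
The plan is to deduce both parts from the classical persistence stability theorem (Cohen-Steiner, Edelsbrunner and Harer) for sublevel-set, i.e. lower-star, filtrations, using the crucial observation that the combinatorial object underlying the F-Cross-Barcode — the doubled graph $\tilde{\mathcal{G}}$ (or the extended lattice $\tilde{\mathcal{L}}$) — depends only on $\mathcal{G}$ and not on the function values. Consequently $\text{F-Cross-Barcode}_k(f,g)$ and $\text{F-Cross-Barcode}_k(f',g')$ are persistence diagrams of lower-star filtrations of one and the same complex, induced by two vertex functions $\tilde{f}$ and $\tilde{f}'$. The stability theorem then gives
$$d_B(\text{F-Cross-Barcode}_k(f,g),\ \text{F-Cross-Barcode}_k(f',g')) \leq \lVert \tilde{f}-\tilde{f}'\rVert_\infty,$$
so part (a) reduces to bounding the sup-norm of the difference of the induced vertex functions.

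For that bound I would treat the three types of vertices of $\tilde{\mathcal{G}}$ separately. On the vertices $A_i$ we have $\lvert \tilde f(A_i)-\tilde f'(A_i)\rvert=\lvert f(i)-f'(i)\rvert$. On the doubled vertices $A'_i$ I use the elementary fact that $\min(\cdot,\cdot)$ is $1$-Lipschitz in $\ell_\infty$, namely $\lvert \min(a,b)-\min(a',b')\rvert\le\max(\lvert a-a'\rvert,\lvert b-b'\rvert)$, which yields $\lvert \tilde f(A'_i)-\tilde f'(A'_i)\rvert\le\max(\lvert f(i)-f'(i)\rvert,\lvert g(i)-g'(i)\rvert)$. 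The same estimate together with the $1$-Lipschitzness of the global minimum controls the apex: $\lvert \min_{f,g}-\min_{f',g'}\rvert\le\max_i\max(\lvert f(i)-f'(i)\rvert,\lvert g(i)-g'(i)\rvert)$. Taking the maximum over all vertices gives $\lVert\tilde f-\tilde f'\rVert_\infty\le\max(\max_i\lvert f(i)-f'(i)\rvert,\max_i\lvert g(i)-g'(i)\rvert)$, which is exactly the right-hand side of (a). The lattice case is identical: $\tilde{\mathcal{L}}$ is fixed, the cubical sublevel-set filtration obeys the same stability estimate, and $\tilde f$ is given by the same min/identity formulas.

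For part (b) I would specialize (a) to the pairs $(f,g)=(f,f')$ and $(f',g')=(f',f')$. The right-hand side collapses to $\max(\max_i\lvert f-f'\rvert,\max_i\lvert f'-f'\rvert)=\max_i\lvert f(i)-f'(i)\rvert$, while the left-hand side is $d_B(\text{F-Cross-Barcode}_k(f,f'),\ \text{F-Cross-Barcode}_k(f',f'))$. It then remains only to show that the self-comparison barcode $\text{F-Cross-Barcode}_k(f',f')$ is the empty diagram; once this is established, the left-hand side equals $d_B(\cdot,\varnothing)=\lVert\text{F-Cross-Barcode}_k(f,f')\rVert_B$ and (b) follows.

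The main obstacle is precisely this emptiness lemma, and it is where the mechanism behind Proposition \ref{prop_std} enters. For the self-comparison the doubled vertex function satisfies $\tilde f(A'_i)=\min(f'(i),f'(i))=f'(i)=\tilde f(A_i)$, so the two copies of each vertex carry identical filtration values, and the inclusion $(f')^{-1}((-\infty,\varepsilon])\hookrightarrow(\min(f',f'))^{-1}((-\infty,\varepsilon])$ is the identity, hence induces a homology isomorphism at every threshold $\varepsilon$. Identifying the F-Cross-Barcode with the persistent relative homology of this pair (the content of Appendix \ref{sec:stability}), this isomorphism forces the relative homology, and therefore every bar, to vanish, so $\text{F-Cross-Barcode}_k(f',f')=\varnothing$. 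I expect this identification — rather than the Lipschitz bookkeeping — to be the substantive step, since it is what lets one read off emptiness from the triviality of the relative homology of a space with itself.
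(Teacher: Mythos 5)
Your proposal is correct and follows essentially the same route as the paper: both reduce (a) to the classical stability of lower-star filtrations on the \emph{fixed} doubled complex by bounding $\lVert\tilde f-\tilde f'\rVert_\infty$ through the $1$-Lipschitzness of $\min$, and both obtain (b) by specializing (a) to $g=g'=f'$ and invoking the emptiness of the self-comparison barcode. The only cosmetic difference is that the paper treats $\text{F-Cross-Barcode}_k(f',f')=\varnothing$ as immediate from the construction (first bullet of Theorem~\ref{thr:basic}), while you derive it from the exact sequence~(\ref{eq:longseq}); both justifications are valid.
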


where $d_B$ denotes the bottleneck distance between persistence barcodes \cite{chazal2017introduction}.
We give the proofs in Appendix~\ref{sec:stability}.
As a consequence, SFTD is robust to noise.

\subsection{Visualization of F-Cross-Barcodes}

\textit{F-Cross-Barcodes} can be used for localization and visualization of topological differences between scalar functions. Recall the filtered cubical\footnote{For ease of presentation, we discuss only the visualization for cubical complexes. The visualization for clique complexes is analogous.} complex:
$$
\varnothing \subseteq \mathcal{C}_0 \subseteq \mathcal{C}_1 \subseteq \ldots \subseteq \mathcal{C}_n.
$$
Since the filtration is induced by a vertex function $\tilde{f} : \tilde{\mathcal{L}} \to \mathbb{R}$, the filtered cubical complex corresponds to the sequence:
$$
\tilde{f}(i_0) \le \tilde{f}(i_1) \le \ldots \le \tilde{f}(i_n).
$$
If $\tilde{f}(i_k) = \tilde{f}(i_m)$, the order is defined by indexes $i_k, i_m$.
Thus, each segment in the barcode - a birth/death pair - is associated with two vertices $v_b, v_d \in \tilde{\mathcal{L}}$. 
Recall that $\tilde{\mathcal{L}}$ is an extended $n+1$ dimensional lattice (see Section \ref{sec:cubical_complex}), where the first dimension is of size 3, and the rest of dimensions are of the same size as the lattice inside the domain of $f, g$. 
So, neglecting the first dimension, we can map the rest of indices $j_1, \ldots, j_n$ of birth/death vertices $v_b, v_d$ right onto original $n-$dimensional lattice.
Later, in the experimental section, we shall provide such visualizations, where birth/death events are depicted by orange/red dots.


\subsection{SFTD differentiation}

Calculation of \mbox{SFTD}$_k(f, g)$ involves a filtered simplicial(cubical) complex $\mathcal{C}$ with a vertex function $\tilde{f}(i), i \in \tilde{\mathcal{V}}$. 
Also, since we apply the sub-level set filtration, for every simplex(cube) $\sigma$: $T(\sigma) = \tilde{f}(i_\sigma)$, where $i_\sigma = \text{argmax}_{i \in \sigma} \tilde{f}(i)$.
Each interval $(b,d)$ in a \textit{F-Cross-Barcode}$_k$ corresponds to some pair of simplices(cubes) $(\sigma_1, \sigma_2)$ which join the simplicial(cubical) complex $\mathcal{C}$ at moments $b=T(\sigma_1), d=T(\sigma_2)$.
Generally, we consider $f,g$ (and consequently $\tilde{f}$) to be parametric functions of $w$, for example, neural networks.
By differentiating (\ref{eq:sftd}) we obtain:
\begin{gather}
\frac{\partial \, \mbox{SFTD}_k(f, g)}{\partial w} 
= \sum_{(\sigma_1, \sigma_2) \in \Sigma_k} p \left( \frac{\partial \tilde{f}(i_{\sigma_2})}{\partial w} - \frac{\partial \tilde{f}(i_{\sigma_2})}{\partial w} \right)^{p-1},
\label{grad1}
\end{gather}
where $\Sigma_k$ is a collection of ``birth'',``death'' simplex pairs corresponding to intervals in the \textit{F-Cross-Barcode}$_{k}(f, g)$.


By construction, $\tilde{f}(i)$ can be either $f(i')$ or $g(i')$ for some mapping $i'=r(i)$. Formally, introducing a binary function $c(\cdot)$ for this choice:
\begin{equation}
\tilde{f}(i) = c(i)f(r(i)) + (1 - c(i))g(r(i)),
\end{equation}
and 
\begin{equation}
\frac{\partial \tilde{f}(i)}{\partial w} = b(i)\frac{\partial f(r(i))}{\partial w} + (1-b(i))\frac{\partial g(r(i))}{\partial w}.
\label{grad2}
\end{equation}

Finally, the desirable gradient $\partial \,\mbox{SFTD}_k (f, g) / \partial w$ can be calculated by combining (\ref{grad1}), (\ref{grad2}) via a chain rule.
With a common assumption that all the $f(i), g(i)$ are distinct, functions $c, r$ don't change in a vicinity of $w$, and $\mbox{SFTD}_i(f, g)$ is differentiable w.r.t. $w$ \cite{hofer2020graph, carriere2021optimizing}.
On a high level, the differentiation is done similarly to
\cite{carriere2021optimizing, trofimov2023learning}.

\section{Experiments}

In the experimental section, we are going to answer three research questions:
\begin{itemize}
\item[\textbf{Q1:}]  Does SFTD and F-Cross-Barcodes provide a better topologically-aware comparison of scalar functions than Wasserstein distance between regular persistence barcodes?
\item[\textbf{Q2:}] Does F-Cross-Barcodes help to visualize and localize topological differences?
\item[\textbf{Q3:}] Is SFTD, as a loss function, beneficial for computer vision problems? 
\end{itemize}

\subsection{Synthetic data}

To illustrate the performance of the proposed tools, we start with synthetic data. 
Functions under comparison have coinciding persistence barcodes of all degrees and they are indistinguishable by standard Wasserstein matching.

\subsubsection{Functions with local minima in 2D.}

\begin{figure}[tp]
\centering
\begin{subfigure}{.32\linewidth}
  \centering
  \includegraphics[width=1\linewidth]{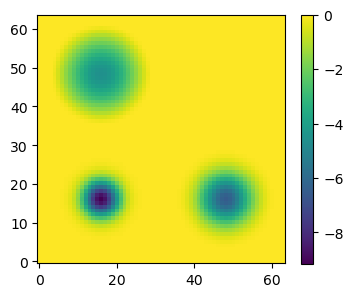}
  \caption{Scalar func. $f_1$.}
\end{subfigure}%
\hfill
\begin{subfigure}{.32\linewidth}
  \centering
  \includegraphics[width=1\linewidth]{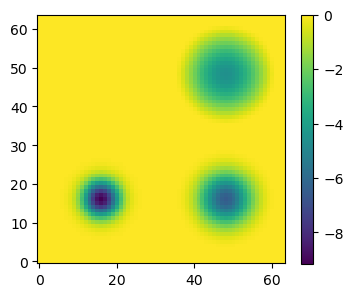}
  \caption{Scalar func. $f_2$.}
\end{subfigure}
\hfill
\begin{subfigure}{.275\linewidth}
  \centering
  \includegraphics[width=1\linewidth]{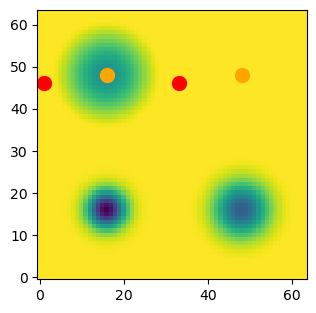}
  \caption{Vis. diff.}
\end{subfigure}

\begin{subfigure}{.32\linewidth}
  \centering
  \includegraphics[width=1\linewidth]{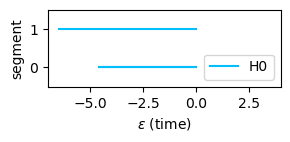}
  \caption{Pers. barc. of $f_1$.}
\end{subfigure}%
\hfill
\begin{subfigure}{.32\linewidth}
  \centering
  \includegraphics[width=1\linewidth]{pic/3_minima/barcode.png}
  \caption{Pers. barc. of $f_2$.}
\end{subfigure}
\hfill
\begin{subfigure}{.32\linewidth}
  \centering
  \includegraphics[width=1\linewidth]{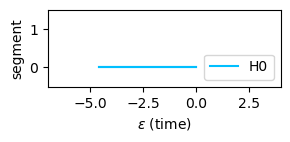}
  \caption{F-Cross-Barcode}
\end{subfigure}

\caption{Two scalar functions (a), (b) are indistinguishable by their persistence barcodes (d), (e). At the same time, $\text{SFTD}_0(f_1, f_2)=4.6$ see (f).
}
\label{fig:3minima}
\end{figure}

First, we compare two scalar function on  64$\times$64 grid, each of them having 3 local minima but in different locations, see Figure \ref{fig:3minima}, (a), (b). Persistence barcodes of both functions are equal, see Figure \ref{fig:3minima}, (d), (e): they contain two segments from $H_0$ corresponding to local minima. 
Here and further, essential segments (which die at $+\infty$) are omitted.
The topology of the two functions  is indistinguishable by standard persistence barcodes and their Wasserstein distance is zero. At the same time, \textit{F-Cross-Barcode} between these scalar functions is non-empty for $H_0$, see Figure \ref{fig:3minima}, (f). The visualization technique correctly points at places with different topology - the centers and borders of the distinguishing local minima.

\subsubsection{Lattices in 2D.}

\begin{figure}[tp]
\centering
\begin{subfigure}{.32\linewidth}
  \centering
  \includegraphics[width=1\linewidth]{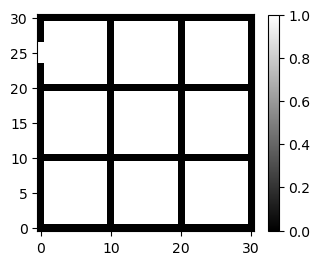}
  \caption{Scalar func. $f_1$.}
\end{subfigure}%
\hfill
\begin{subfigure}{.32\linewidth}
  \centering
  \includegraphics[width=1\linewidth]{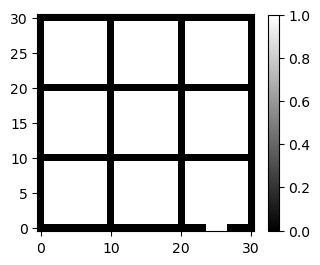}
  \caption{Scalar func. $f_2$.}
\end{subfigure}
\hfill
\begin{subfigure}{.27\linewidth}
  \centering
  \includegraphics[width=1\linewidth]{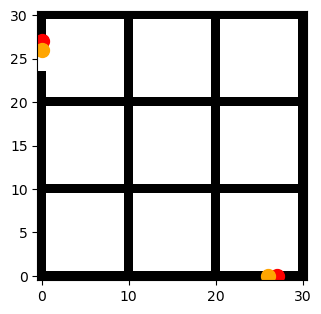}
  \caption{Vis. diff.}
\end{subfigure}

\begin{subfigure}{.32\linewidth}
  \centering
  \includegraphics[width=1\linewidth]{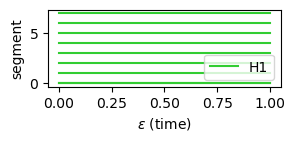}
  \caption{Pers. barc. of $f_1$.}
\end{subfigure}%
\hfill
\begin{subfigure}{.32\linewidth}
  \centering
  \includegraphics[width=1\linewidth]{pic/2d_lattice/barcode.png}
  \caption{Pers. barc. of $f_2$.}
\end{subfigure}
\begin{subfigure}{.32\linewidth}
  \centering
  \includegraphics[width=1\linewidth]{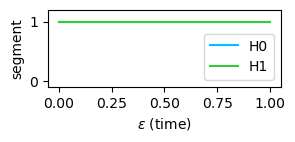}
  \caption{F-Cross-Barcode.}
\end{subfigure}
\caption{
Two scalar functions (a), (b) are indistinguishable by their persistence barcodes (c), (d). At the same time, $\text{SFTD}_1(f_1, f_2)=1$, see (e).
}
\label{fig:2d_lattice}
\end{figure}

Next, we compare two lattice-style 2D scalar functions, see Figure \ref{fig:2d_lattice}, (a), (b). Lattices have defects in different places (top left and bottom right respectively). The persistence barcodes of both lattices are identical, see Figure \ref{fig:2d_lattice}, (d), (e): they contain 8 segments from $H_1$ corresponding to independent cycles.
At the same time, \textit{F-Cross-Barcode} between two scalar functions is non-empty, see Figure \ref{fig:2d_lattice}, (f). The visualization technique correctly points at places with defects.  

\subsubsection{Concentric spheres in 3D.}

\begin{figure}[t]
\centering
\begin{subfigure}{.32\linewidth}
  \centering
  \includegraphics[width=1\linewidth]{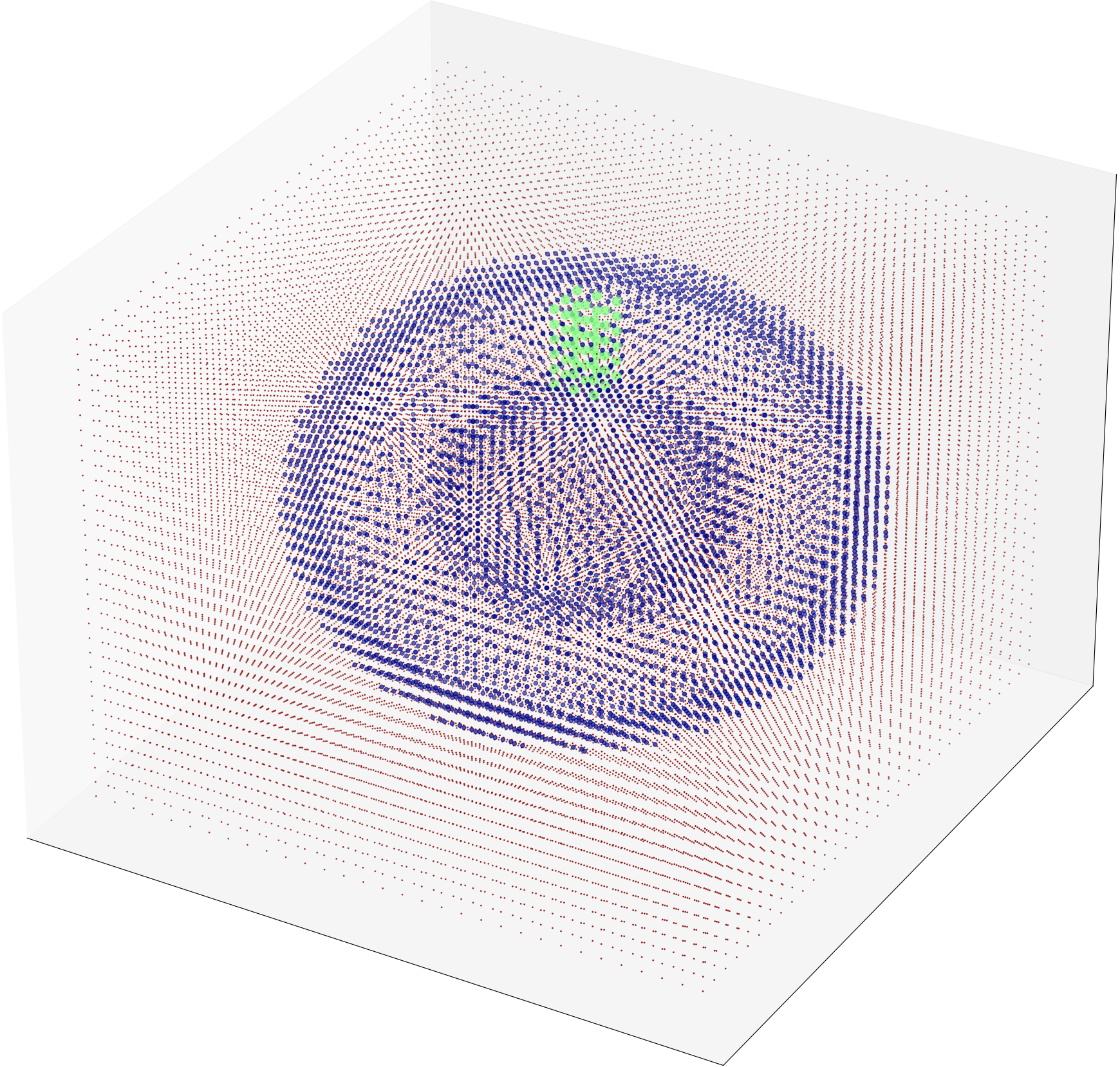}
  \caption{Scalar func. $f_1$.}
\end{subfigure}%
\hfill
\begin{subfigure}{.32\linewidth}
  \centering
  \includegraphics[width=1\linewidth]{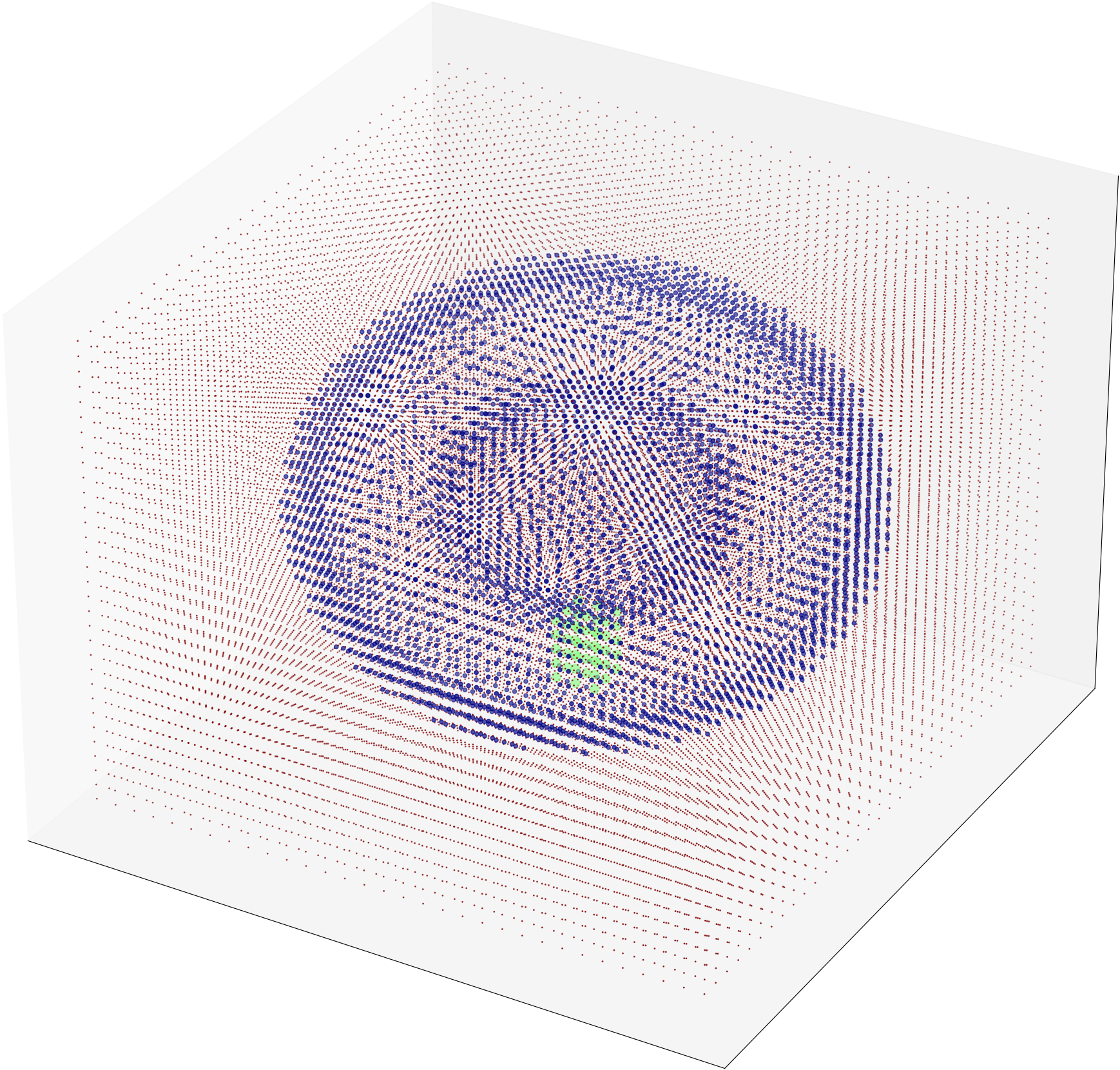}
  \caption{Scalar func. $f_2$.}
\end{subfigure}
\hfill
\begin{subfigure}{.32\linewidth}
  \centering
  \includegraphics[width=1\linewidth]{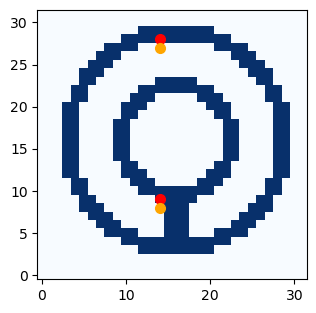}
  \caption{Vis. diff.}
\end{subfigure}

\begin{subfigure}{.32\linewidth}
  \centering
  \includegraphics[width=1\linewidth]{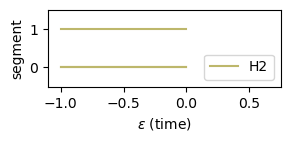}
  \caption{Pers. barc. of $f_1$.}
\end{subfigure}%
\hfill
\begin{subfigure}{.32\linewidth}
  \centering
  \includegraphics[width=1\linewidth]{pic/3d_spheres/barcode.png}
  \caption{Pers. barc. of $f_2$.}
\end{subfigure}
\begin{subfigure}{.32\linewidth}
  \centering
  \includegraphics[width=1\linewidth]{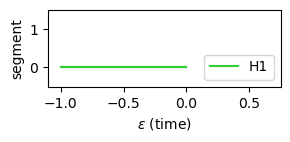}
  \caption{F-Cross-Barcode.}
\end{subfigure}
\caption{
Two scalar fields (a), (b) are indistinguishable by their persistence barcodes (d), (e). At the same time, $\text{SFTD}_1(f_1, f_2)=1$, see (f).}
\label{fig:3d_spheres}
\end{figure}

Finally, we study the performance of the proposed method by comparing 3D scalar functions.
We compare two scalar functions, which equal $-1$ in a vicinity of two concentric spheres and a bridge connecting them, which is depicted in green.
The bridge is located in different places (above or below of the smallest sphere),  
In other areas of space, the scalar functions equal $0$.
We sample a $32^{3}$ grid from the unit cube, see Figure \ref{fig:3d_spheres} (a), (b) and compare functions restricted on this grid  via \textit{F-Cross-Barcode}. Again, regular persistence barcodes of two spheres with bridges are the same, see Figure \ref{fig:3d_spheres}, (d), (e): they contain 2 segments from $H_2$ corresponding to two cavities. But \textit{F-Cross-Barcode} is non-empty, see Figure \ref{fig:3d_spheres}, (f). The visualization technique correctly highlights areas with topological differences, see Figure \ref{fig:3d_spheres} (c), where the central vertical slice is shown.

\subsection{Comparing Eigenvectors of Graph Laplacian}

\begin{figure}[tp]
\begin{minipage}{0.48\textwidth}
\centering
\includegraphics[width=\linewidth]{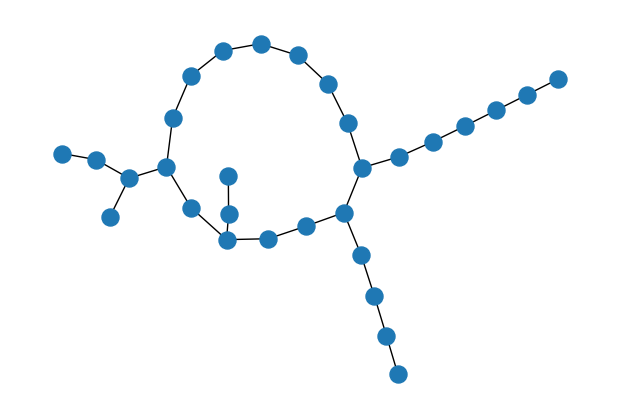}
\caption{A random Watts–Strogatz small-world graph \cite{watts1998collective} with 30 nodes.}
\label{fig:graph}
\end{minipage}
\hfill
\begin{minipage}{0.48\textwidth}
\centering
\includegraphics[width=0.97\linewidth]{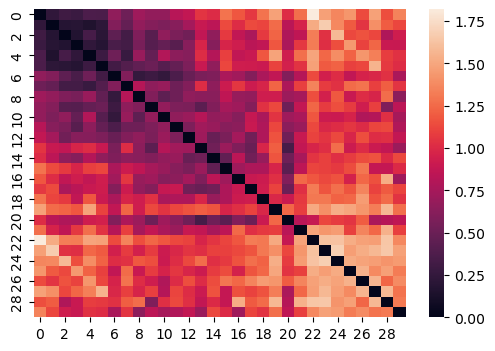}
  \caption{Comparison of graph Laplacian eigenvectors via SFTD.}
\label{fig:graph_eigenvectors_heatmap}
\end{minipage}
\end{figure}

\begin{figure*}[tp]
\centering
\begin{subfigure}{.24\linewidth}
  \centering
  \includegraphics[width=1\linewidth]{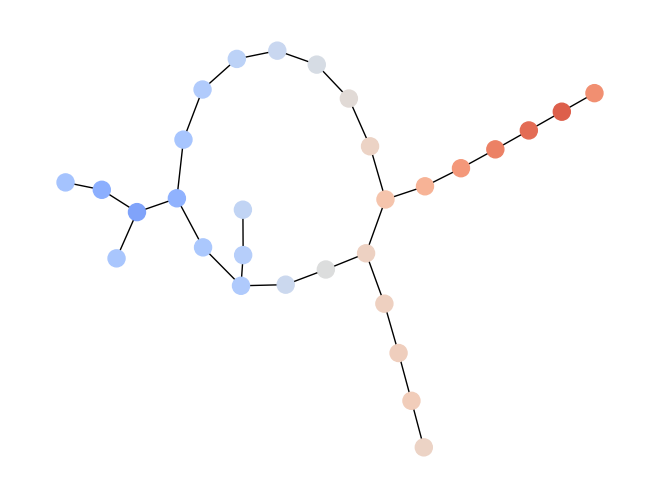}
  \caption{Eigenvector 1.}
\end{subfigure}%
\hfill
\begin{subfigure}{.24\linewidth}
  \centering
  \includegraphics[width=1\linewidth]{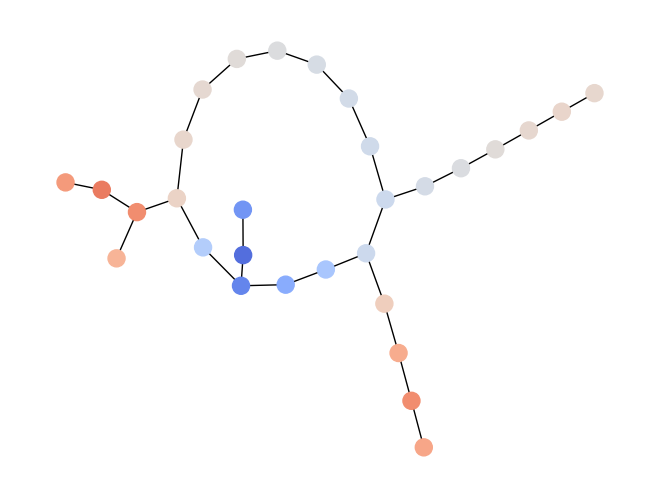}
  \caption{Eigenvector 4.}
\end{subfigure}
\begin{subfigure}{.24\linewidth}
  \centering
  \includegraphics[width=1\linewidth]{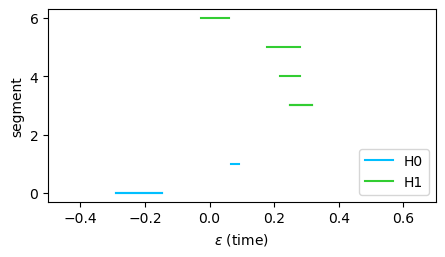}
  \caption{F-Cross-Barcode($e_1$, $e_{4}$).}
\end{subfigure}%
\hfill
\begin{subfigure}{.24\linewidth}
  \centering
  \includegraphics[width=1\linewidth]{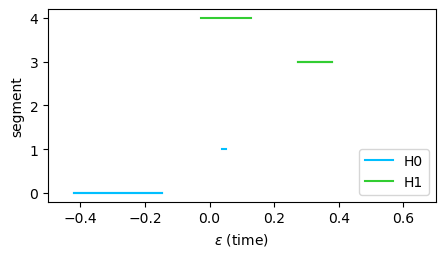}
  \caption{F-Cross-Barcode($e_{4}$, $e_1$).}
\end{subfigure}
\caption{Comparison of $e_1$ and $e_4$.}
\label{fig:graph_eigenvector_similar}
\end{figure*}

\begin{figure*}[tp]
\centering
\begin{subfigure}{.24\linewidth}
  \centering
  \includegraphics[width=1\linewidth]{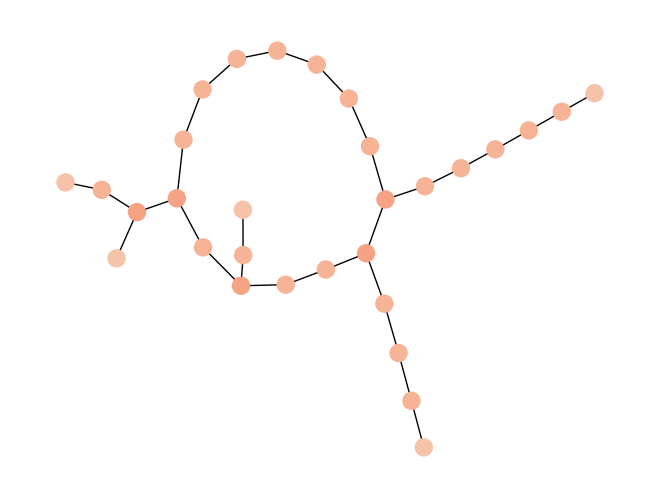}
  \caption{Eigenvector 0.}
\end{subfigure}%
\hfill
\begin{subfigure}{.24\linewidth}
  \centering
  \includegraphics[width=1\linewidth]{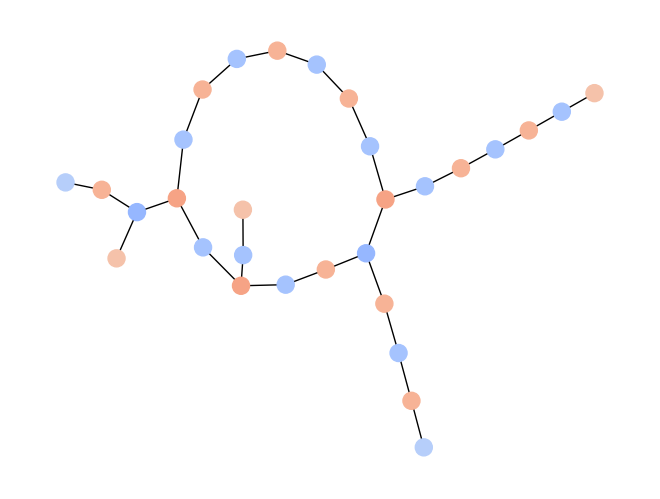}
  \caption{Eigenvector 22.}
\end{subfigure}
\begin{subfigure}{.24\linewidth}
  \centering
  \includegraphics[width=1\linewidth]{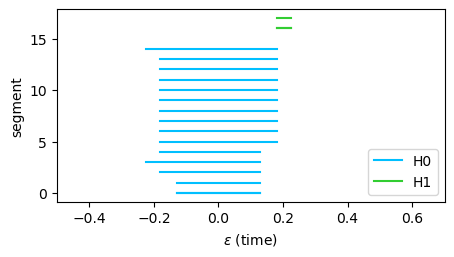}
  \caption{F-Cross-Barcode($e_0$, $e_{22}$).}
\end{subfigure}%
\hfill
\begin{subfigure}{.24\linewidth}
  \centering
  \includegraphics[width=1\linewidth]{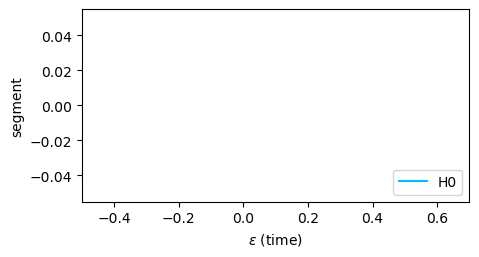}
  \caption{F-Cross-Barcode($e_{22}$, $e_0$).}
\end{subfigure}
\caption{Comparison of $e_0$ and $e_{22}$.}
\label{fig:graph_eigenvector_dissimilar}
\end{figure*}

In the most general form, the proposed tools can compare scalar functions $f: \mathcal{V} \to \mathbb{R}$ defined on vertices $\mathcal{V}$ of an undirected graph $\mathcal{G} = (\mathcal{V}, \mathcal{E})$. 
The normalized graph Laplacian is the matrix
$L = D^{-1/2}(D-A)D^{-1/2}$
where $A$ is an adjacency matrix and $D$ is a diagonal matrix of node degrees.
Eigenvectors $e_i$ and eigenvalues $\lambda_i$ of $L$ describe intrinsic properties of the graph $\mathcal{G}$. 
Eigenvectors can be considered as scalar functions defined on graph's vertices $\mathcal{V}$. For eigenvectors: $(e_i, e_j) = 0$ and their norm is not defined; one can put $||e_i||=1$ for convenience. 
Thus, $e_i$ can't be distinguished by Euclidean distance or cosine distance.
Alternatively, we can study $e_i$ as functions on vertices and compare them via SFTD.

We consider a random graph with 30 nodes, see Figure \ref{fig:graph}. Figure \ref{fig:graph_eigenvectors_heatmap} presents symmetrized $\sum_{k=0,1}\mbox{SFTD}_k(e_i, e_j)$ via a heatmap, where $e_i, e_j$ are graph Laplacian eigenvectors ordered by eigenvalues. We observed the following interesting patterns: block structure, high topological similarity in the top left block, low topological similarity in the bottom right block, and very interesting diagonal lines parallel to the main diagonal.
Figure \ref{fig:graph_eigenvector_similar} shows two the most similar eigenvectors, Figure \ref{fig:graph_eigenvector_dissimilar} - two the most dissimilar ones, while their \textit{F-Cross-Barcodes} are shown on the right.
\subsection{Enhancing 3D reconstruction}

\begin{table*}[tbp]
\caption{Errors of 3D shape reconstruction for ``Red blood cell'' dataset.}
\label{tbl:SHAPR_red_blood_cell}
\vskip 0.15in
\begin{center}
\begin{small}
\begin{sc}
\begin{tabular}{lccccccc}
\toprule
Method & 1-IoU & Volume & Surface & Rough. & $H_0$ & $H_1$ & $H_2$\\
\midrule
SHAPR \cite{waibel2022shapr}  & 0.487 & 0.353 & 0.243 & 0.361 & 0.098 & 0.241 & 0.065 \\
SHAPR+W.D. \cite{waibel2022capturing} & 0.467 & 0.291 &  0.184 & 0.294 & 0.098 & 0.228 & 0.032\\
SHAPR+$\sum_{i=0, 1, 2} \text{SFTD}_i$  & \underline{\textbf{0.449}} & 0.294 & \textbf{0.176} & 0.282 & \textbf{0.079} & 0.222 & 0.065\\
SHAPR+$\sum_{i=0, 1, 2, 3} \text{SFTD}_i$  & 0.453 & \underline{\textbf{0.266}} & 0.189 & \textbf{0.276} & \underline{\textbf{0.079}} & \underline{\textbf{0.217}} & \underline{\textbf{0.025}}\\
\bottomrule
\end{tabular}
\end{sc}
\end{small}
\end{center}
\end{table*}

\begin{table*}[hp!]
\caption{Errors of 3D shape reconstruction for ``Cell nuclei'' dataset.}
\label{tbl:SHAPR_cell_nuclei}
\begin{center}
\begin{small}
\begin{sc}
\begin{tabular}{lccccccc}
\toprule
Method & 1-IoU & Volume & Surface & Rough. & $H_0$ & $H_1$ & $H_2$\\
\midrule
SHAPR \cite{waibel2022shapr}            & 0.612 & 0.482 & 0.273 & {0.191} & \textbf{0.000} & \textbf{0.007} & 0.038 \\
SHAPR+W.D. \cite{waibel2022capturing} & 0.612 & {0.437} & 0.251 & 0.200 & \textbf{0.000} & 0.008 & 0.027\\
SHAPR+$\sum_{i=0, 1, 2} \text{SFTD}_i$  & {0.618} & \textbf{0.432} & \textbf{0.244} & \textbf{\underline{0.187}} & \textbf{0.000} & {0.008} & 0.020\\
SHAPR+$\sum_{i=0, 1, 2, 3} \text{SFTD}_i$  & \textbf{\underline{0.597}} & {0.437} & {0.247} & {0.191} & \textbf{0.000} & 0.011 & \underline{\textbf{0.011}}\\

\bottomrule
\end{tabular}
\end{sc}
\end{small}
\end{center}
\end{table*}


SHAPR (SHApe PRediction) \cite{waibel2022shapr} is a deep learning model for reconstructing cellular and nuclear 3D shapes from 2D fluorescence microscopy images. 
Correct reconstruction of structural patterns in 3D shapes is critical for this problem. However, such information is not captured by established pixel/voxel-wise loss terms, e.g. Dice loss.
Later, \cite{waibel2022capturing} improved SHAPR by adding contextual information about the overall shape of an object. 
The work \cite{waibel2022capturing} proposed to augment SHAPR with a topological loss term - a Wasserstein-2 distance (W.D.) between persistence barcodes of ground truth and predicted 3D objects and a total persistence.
However, as we discussed before, this loss term doesn't take into account the localization of topological features.
We replaced the topological loss term in SHAPR with a symmetrized SFTD with $p = 2$ in two variants\footnote{A 3D cubical complex has empty $H_3$ homologies. But during the calculation of SFTD for 3D scalar functions, a 4D cubical complex is created, where $H_3$ homologies can exist.}: dimensions up to 2 and dimensions up to 3. Then, we carried out experiments with two datasets from \cite{waibel2022shapr}. 

Table~\ref{tbl:SHAPR_red_blood_cell} shows experimental results for the ``Red blood cell'' dataset.
We conclude that SHAPR augmented with the SFTD loss term is the best one in terms of 3D shape matching errors : IoU error, volume error, surface error, and roughness error (see details in \cite{waibel2022shapr}). Additionally, we calculated Wasserstein-2 distances between persistence barcodes ($H_0$, $H_1$, $H_2$) of ground truth and generated 3D objects. Statistically significant improvements are underlined. The proposed SFTD topological loss term with 0-3 dimensions has the lowest Wasserstein-2 distances. This is the rare case of an application of higher homologies like $H_3$ for real-world applications.

For the ``Cell nuclei'' dataset results are presented in Table \ref{tbl:SHAPR_cell_nuclei}. Differences are quite small and are statistically significant only for two metrics: IoU error and W.D. $H_2$, where SHAPR with SFTD loss is the best model.

\begin{figure}[tbp]
\centering
\begin{subfigure}{.32\linewidth}
  \centering
  \includegraphics[width=1\linewidth]{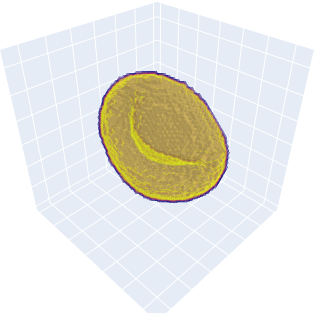}
  \caption{Ground truth}
\end{subfigure}%
\hfill
\begin{subfigure}{.32\linewidth}
  \centering
  \includegraphics[width=1\linewidth]{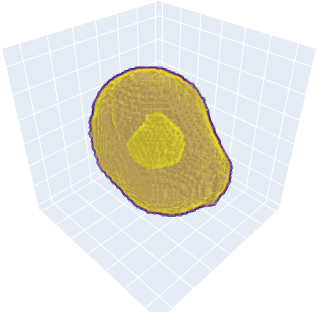}
  \caption{SHAPR+W.D.}
\end{subfigure}
\hfill
\begin{subfigure}{.32\linewidth}
  \centering
  \includegraphics[width=1\linewidth]{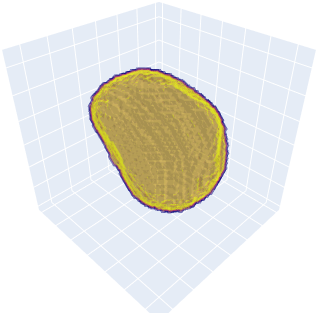}
  \caption{SHAPR+SFTD}
\end{subfigure}

\begin{subfigure}{.32\linewidth}
  \centering
  \includegraphics[width=1\linewidth]{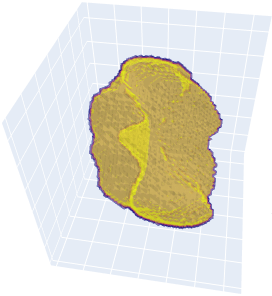}
  \caption{Ground truth}
\end{subfigure}%
\hfill
\begin{subfigure}{.32\linewidth}
  \centering
  \includegraphics[width=1\linewidth]{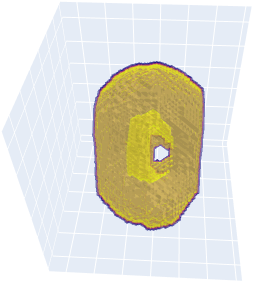}
  \caption{SHAPR+W.D.}
\end{subfigure}
\begin{subfigure}{.32\linewidth}
  \centering
  \includegraphics[width=1\linewidth]{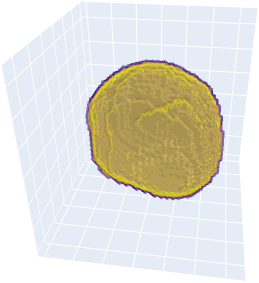}
  \caption{SHAPR+SFTD}
\end{subfigure}

\begin{subfigure}{.32\linewidth}
  \centering
  \includegraphics[width=1\linewidth]{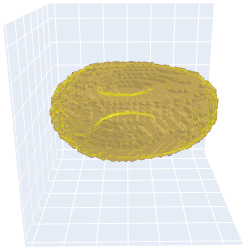}
  \caption{Ground truth}
\end{subfigure}%
\hfill
\begin{subfigure}{.32\linewidth}
  \centering
  \includegraphics[width=1\linewidth]{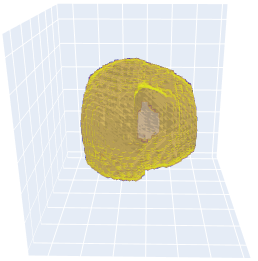}
  \caption{SHAPR+W.D.}
\end{subfigure}
\begin{subfigure}{.32\linewidth}
  \centering
  \includegraphics[width=1\linewidth]{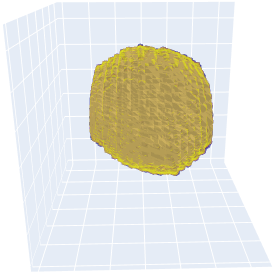}
  \caption{SHAPR+SFTD}
\end{subfigure}

\caption{Examples of topological errors. Top, $H_0$: SHAPR+W.D. has a wrong cavity inside. Middle, $H_1$: SHAPR+W.D. has a wrong hole. Bottom, $H_2$: SHAPR+W.D. has a wrong surface break.}
\label{fig:shapr_errors}
\end{figure}

Figure \ref{fig:shapr_errors} shows 3D visualizations of 2D $\to$ 3D reconstructions. We see that even with the topological regularization from \cite{waibel2022capturing}, 3D reconstructions are sometimes erroneous and have serious topological defects: cavities, holes, and surface breaks. In experiments, the sublevel filtration was used and a density function equaled ``1'' inside the object and ``0'' in the ambient space. That is, a cavity inside a sphere corresponds to an extra bar in $H_0$ barcode, an extra connected component corresponds to an extra bar in $H_2$ barcode.


\subsection{Identifying topological errors in 3D segmentation}


Brain tumor 3D segmentation aims to reveal the sub-regions of tumor corresponding to the enhancing tumor, the edematous/invaded tissue, the necrosis. Availability of versatile similarity measures not only allows precise evaluation of deep learning 3D segmentation models, but also assists monitoring disease progression in clinical practice. As mentioned above, measuring the similarity of the two 3D segmentations by comparing their persistence barcodes does not account for localization of the topological features. In this section, we illustrate the ability of SFTD to complement the existing metrics with this additional information.

In order to investigate the applicability of the proposed SFTD as a metric for 3D segmentation, we utilize the publicly available pretrained Swin UNETR model \cite{hatamizadeh2022swin, tang2022self-supervised} and BraTS 21 dataset \cite{baid2021rsna, menze2015multimodal, bakas2017advancing, bakas2017segmentation}. Figures \ref{3d_segm_label}, \ref{3d_segm_pred} demonstrate the ground truth segmentation and model’s prediction. While each of these two 3D segmentations has two voids inside, only two voids have the same localization which makes the two 3D segmentations different. As shown in Figures \ref{3d_segm_pers_barc_label}, \ref{3d_segm_pers_barc_pred}, the two 3D segmentations are indistinguishable by their persistence barcodes. In contrary, SFTD is able to recognise the difference between these two cases as revealed by the F-Cross-Barcode in Figure \ref{3d_segm_sfd_barc}. In particular, SFTD indicates that the difference is precisely in locations of one pair of voids since the set of $1$-dimensional topological features contains one segment.

\begin{figure}[tbp]
\centering
\begin{subfigure}{.32\linewidth}
  \centering
  \includegraphics[width=1\linewidth]{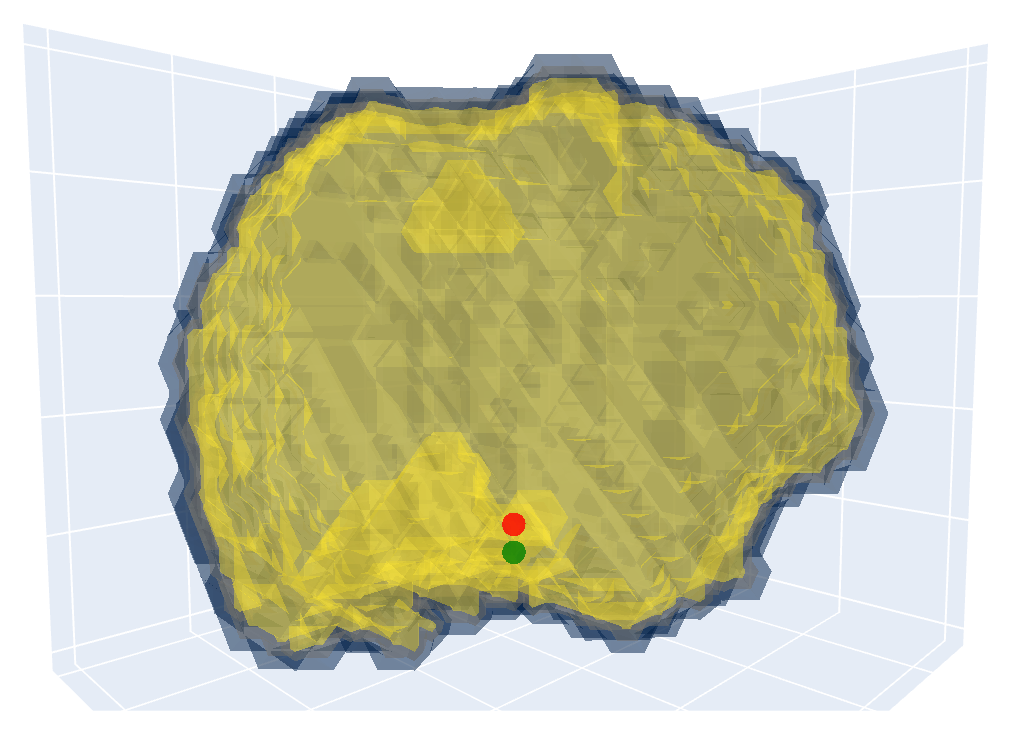}
  \caption{Ground truth}
  \label{3d_segm_label}
\end{subfigure}%
\hfill
\begin{subfigure}{.32\linewidth}
  \centering
  \includegraphics[width=1\linewidth]{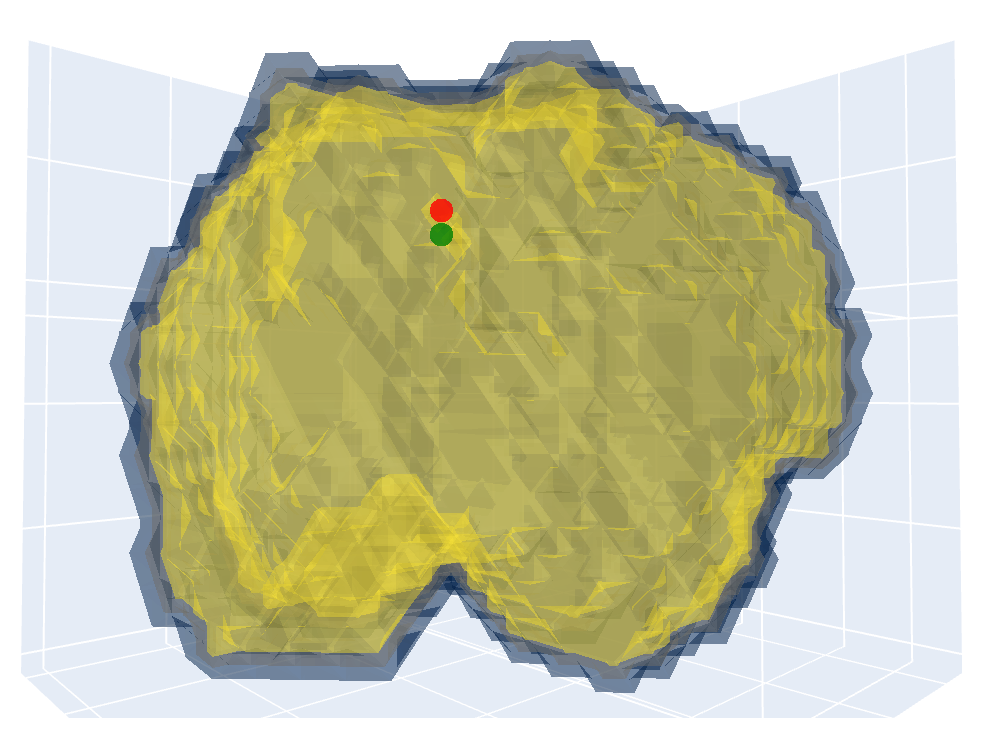}
  \caption{Prediction}
  \label{3d_segm_pred}
\end{subfigure}
\begin{subfigure}{.32\linewidth}
  \centering
  \includegraphics[width=1\linewidth]{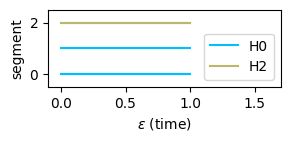}
  \caption{Pers. barc. of ground truth}
  \label{3d_segm_pers_barc_label}
\end{subfigure}%
\hfill
\begin{subfigure}{.32\linewidth}
  \centering
  \includegraphics[width=1\linewidth]{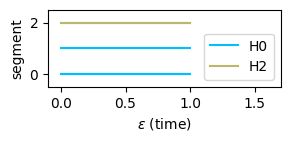}
  \caption{Pers. barc. of prediction}
  \label{3d_segm_pers_barc_pred}
\end{subfigure}
\hfill
\begin{subfigure}{.32\linewidth}
  \centering
  \includegraphics[width=1\linewidth]{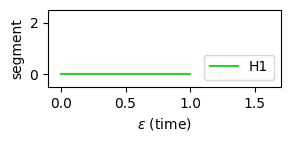}
  \caption{F-Cross-Barcode}
  \label{3d_segm_sfd_barc}
\end{subfigure}

\caption{Examples of topological errors. Two 3D segmentations - (a) - ground truth and (b) - prediction - are indistinguishable by their persistence barcodes (c), (d). At the same time, $\text{SFTD}$ between ground truth and prediction equals to $1$, see (e). A pair of points inside each object in (a), (b) corresponds to the dissimilar topological features, voids, where birth/death events are depicted by red/green dots.}
\label{fig:3d_segm_errors}
\end{figure}

\section{Conclusions}

In this paper, we have proposed new topological tools: F-Cross-Barcode and Scalar Function Topology Divergence (SFTD), which measure the dissimilarity of multi-scale topology between sublevel sets of two functions having a common domain. Functions can be defined on an undirected graph or Euclidean space of any dimensionality.
We have shown that our tools outperform the standard approach: evaluating Wasserstein distance between persistence barcodes, calculated independently for sublevel sets of scalar functions. 
F-Cross-Barcode and SFTD take into account the localization of topological features.
SFTD, as a loss function, is beneficial in computer vision, since it helps to enforce topological similarity at multiple scales. Also, the proposed tools provide useful visualizations of areas of topological dissimilarities.
In the experimental section, we demonstrated that SFTD improved the reconstruction of cellular 3D shapes from 2D fluorescence microscopy images and identified topological errors in 3D segmentation.
Additionally, we showed that SFTD outperforms Betti matching loss in 2D segmentation problems.
We believe that the proposed tools will have more applications in computer vision and learning from graph data.


\par\vfill\par

\section*{Acknowledgements}
The work was supported by the Analytical center under the RF Government (subsidy agreement 000000D730321P5Q0002, Grant No. 70-2021-00145 02.11.2021)

%
%
\bibliographystyle{splncs04}
\bibliography{biblio}

\newpage

%
%
%
%
\appendix
\section{Proof of SFTD basic properties}
\label{sec:stability}

\begin{proposition}\label{prop:stab2}\textbf{(SFTD Stability)}
   For any scalar functions $f$, ${{f}'}$, $g$, 
 ${{g}'}$ on a grid, or on a graph vertex set: 
  \vskip-0.25in
\begin{multline}
    d_B(\text{F-Cross-Barcode} _ k(f,g), \text{F-Cross-Barcode} _ k({f}',{g}')) \leq \\ \max(\max _ {i} \lvert f(i)-{f}'(i)\rvert ,\max _ {i} \lvert g(i)-{g}'(i)\rvert ). \label{eq:dB}
\end{multline}
 \vskip-0.15in   
\end{proposition}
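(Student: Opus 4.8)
The key structural observation is that, by construction, the complex underlying $\tilde{\mathcal{G}}$ (or the extended lattice $\tilde{\mathcal{L}}$) together with its simplicial/cubical incidence structure does not depend on the function values: changing $f,g$ alters only the vertex function $\tilde f$, and hence the filtration values, but never which cells are present. Consequently F-Cross-Barcode$_k(f,g)$ and F-Cross-Barcode$_k(f',g')$ are the $k$-th persistence diagrams of lower-star filtrations of \emph{one and the same} complex, induced by the two vertex functions $\tilde f$ and $\tilde f'$. This reduces the claim to the classical stability theorem for persistence of lower-star filtrations, which yields
$$
d_B\bigl(\mathrm{Dgm}_k(\tilde f),\mathrm{Dgm}_k(\tilde f')\bigr)\le \lVert \tilde f-\tilde f'\rVert_\infty = \max_{i\in\tilde{\mathcal V}}\bigl\lvert \tilde f(i)-\tilde f'(i)\bigr\rvert .
$$
So the first step is simply to invoke this theorem; everything else is bounding the right-hand side by $\max(\max_i|f(i)-f'(i)|,\max_i|g(i)-g'(i)|)$.

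For the second step I would run through the three types of vertices of $\tilde{\mathcal V}$ (equivalently, the three layers $i=0,1,2$ of $\tilde{\mathcal L}$) and estimate $|\tilde f-\tilde f'|$ on each. On the vertices $A_i$ we have $\tilde f(A_i)=f(i)$, so the difference is exactly $|f(i)-f'(i)|$. On the vertices $A'_i$ we have $\tilde f(A'_i)=\min(f(i),g(i))$, and here I would use the elementary fact that $\min\colon\mathbb R^2\to\mathbb R$ is $1$-Lipschitz in the $\ell^\infty$-norm, namely
$$
\bigl\lvert \min(a,b)-\min(a',b')\bigr\rvert \le \max\bigl(\lvert a-a'\rvert,\lvert b-b'\rvert\bigr),
$$
which gives $|\tilde f(A'_i)-\tilde f'(A'_i)|\le\max(|f(i)-f'(i)|,|g(i)-g'(i)|)$. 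The extra vertex $O$ carries the global minimum $\min_{f,g}$ over all nodes of both functions, and since a minimum over a finite set is again $1$-Lipschitz in $\ell^\infty$, the same bound applies. Taking the maximum over all vertices yields $\lVert\tilde f-\tilde f'\rVert_\infty\le\max(\max_i|f(i)-f'(i)|,\max_i|g(i)-g'(i)|)$, and combining with the stability theorem proves (\ref{eq:dB}).

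The conceptual content lies entirely in the first step; the remaining estimates are routine Lipschitz bookkeeping, whose essence is that every coordinate of $\tilde f$ is a $1$-Lipschitz function (in $\ell^\infty$) of the pair $(f,g)$, so a perturbation of $f,g$ by at most $\delta$ perturbs $\tilde f$ by at most $\delta$. The step I would be most careful about is confirming that the lower-star stability theorem applies verbatim here, i.e. that the complex and the monotonicity of the filtration are genuinely independent of the function values, so that the two barcodes really are diagrams of a fixed complex under two different vertex functions. Once that is verified, no further work is needed.
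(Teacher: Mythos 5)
Your proposal is correct and follows essentially the same route as the paper: both reduce the claim to the observation that the two F-Cross-Barcodes are lower-star persistence barcodes of the \emph{same} doubled complex under the vertex functions $\tilde f$ and $\tilde f'$, bound $\lVert\tilde f-\tilde f'\rVert_\infty$ by $\max(\max_i|f(i)-f'(i)|,\max_i|g(i)-g'(i)|)$ via the $1$-Lipschitzness of $\min$ on each vertex type, and then invoke the standard stability of persistence under sup-norm perturbation of the filtration (the paper cites the metamorphoses of canonical forms for this last step, where you cite the classical lower-star stability theorem --- these are the same fact). Your explicit case analysis over the vertices $A_i$, $A'_i$, $O$ merely spells out a step the paper states without detail.
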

Here $d_B$ denotes the bottleneck distance between persistence barcodes \cite{chazal2017introduction}.
\begin{proof}The proof  is parallel in the simplicial and in the cubical cases. {By construction,  $\text{F-Cross-Barcode}_k(f,g)$, $\text{F-Cross-Barcode}_k(f',g')$ are the} $k-$th persistence barcodes of the lower-star filtrations induced by the functions $\tilde{f}$, $\tilde{f}'$
    on the doubled graph $\tilde{\mathcal{G}}$ from section \ref{sec:graphs}.  If $\max(\max _ {i} \lvert f(i)-{f}'(i)\rvert ,\max _ {i} \lvert g(i)-{g}'(i)\rvert )=\varepsilon$, then $\max_{{j}\in \tilde{\mathcal{V}}} \lvert \tilde{f}(j)-\tilde{f}'(j)\rvert\leq \varepsilon$. 
     Hence the filtration of each simplex in the lower star filtration induced by $\tilde{f}$ on   $\tilde{\mathcal{G}}$ changes at most by $\varepsilon$ under the perturbation $\tilde{f}'$.  It follows from e.g. the description of metamorphoses of canonical forms in \cite{barannikov1994framed} that the birth or the death of each segment in the $k-$th barcode of $\tilde{\mathcal{G}}$ changes under such perturbation at most by $\varepsilon$. 
\end{proof}
\begin{proposition} 
 For any scalar functions $f$, ${f}'$: 
 \vskip-0.1in
 $$\lVert \text{F-Cross-Barcode}_k(f,{f}')\rVert_B \leq \max_{i} \lvert f(i)-{f}'(i)\rvert.$$   
 \vskip-0.1in
\end{proposition}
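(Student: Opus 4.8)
The plan is to derive this bound as a direct corollary of the stability estimate already established in Proposition~\ref{prop:stab2}, by comparing the F-Cross-Barcode of the pair $(f,f')$ against that of the degenerate pair $(f',f')$. Recall that the bottleneck norm of a barcode is by definition its bottleneck distance to the empty diagram, $\lVert D\rVert_B = d_B(D,\varnothing)$. Hence it suffices to produce an \emph{empty} barcode lying within bottleneck distance $\max_i\lvert f(i)-f'(i)\rvert$ of $\text{F-Cross-Barcode}_k(f,f')$, and the natural candidate is the self-comparison barcode $\text{F-Cross-Barcode}_k(f',f')$.

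The key auxiliary fact, which is where the real content sits, is that $\text{F-Cross-Barcode}_k(f',f')=\varnothing$ for every function $f'$. By construction this barcode is that of the lower-star filtration of the doubled graph $\tilde{\mathcal G}$ whose vertex function takes the value $\tilde f(A_i)=f'(i)$ on the unprimed copy and $\tilde f(A_i')=\min(f'(i),f'(i))=f'(i)$ on the primed copy, with $O$ at the global minimum. Thus the matching primed and unprimed cells, together with the cone vertex $O$, enter the filtration at identical thresholds, so the inclusion $f'^{-1}((-\infty,\varepsilon])\subseteq(\min(f',f'))^{-1}((-\infty,\varepsilon])$ is literally the identity at every~$\varepsilon$. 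The relative persistent homology recorded by the F-Cross-Barcode is therefore zero in all degrees, and no interval of positive length can appear; the barcode is empty. (Concretely, one can verify this combinatorially: every class born when a primed cell is added is killed in lockstep by the corresponding unprimed cell or cone edge, leaving only zero-length intervals.) This step, rather than any barcode arithmetic, is the main obstacle; note that one cannot simply invoke Proposition~\ref{prop_std} here, since its hypothesis $\mathrm{SFTD}=0$ is precisely what would need to be known in advance.

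With the lemma in hand I would apply Proposition~\ref{prop:stab2}, taking the first pair of functions to be $(f,f')$ and the second pair to be $(f',f')$. The right-hand side of the stability inequality then becomes
$$\max\bigl(\max_i\lvert f(i)-f'(i)\rvert,\ \max_i\lvert f'(i)-f'(i)\rvert\bigr)=\max_i\lvert f(i)-f'(i)\rvert,$$
because the second term vanishes identically. The left-hand side is $d_B\bigl(\text{F-Cross-Barcode}_k(f,f'),\,\text{F-Cross-Barcode}_k(f',f')\bigr)=d_B\bigl(\text{F-Cross-Barcode}_k(f,f'),\varnothing\bigr)=\lVert\text{F-Cross-Barcode}_k(f,f')\rVert_B$, using the lemma and the definition of the bottleneck norm. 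Combining the two sides yields exactly the claimed inequality $\lVert\text{F-Cross-Barcode}_k(f,f')\rVert_B\leq\max_i\lvert f(i)-f'(i)\rvert$.
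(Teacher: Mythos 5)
Your proof is correct and takes essentially the same route as the paper's, whose entire argument is ``apply the stability inequality of Proposition~\ref{prop:stab2} with $g=g'=f'$''; the paper leaves implicit the fact that $\text{F-Cross-Barcode}_k(f',f')=\varnothing$ (recorded separately as the first bullet of Theorem~\ref{thr:basic}), which you correctly identify as the key auxiliary step and justify explicitly.
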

\begin{proof}
    Apply equation (\ref{eq:dB}) with $g=g'=f'$.
\end{proof}

Let $C^{f\leq\alpha}(\mathcal{G})$ denotes the simplicial (cubical) subcomplex of the simplicial (cubical) complex associated with graph (lattice) $\mathcal{G}$, containing all simplices (cells) on which $f\leq\alpha$. The proof of propositon \ref{prop_std} follows from the exactness of the sequence in equation \ref{eq:longseq} below.
\begin{theorem}\label{thr:basic}{F-Cross-Barcode}$_k(f,g)$ has the folowing properties: 
\begin{itemize}[topsep=0pt,noitemsep,nolistsep, partopsep=0pt, parsep=0ex, leftmargin=*]
\item if $f(i)=g(i)$ for any $i\in \mathcal{V}$, then {F-Cross-Barcode}$_k(f,g)=\varnothing$ for any $k\ge 0$;
\item if $g(i)=\min_{j\in \mathcal{V}}{f(j)}$ for any $i$, then for all $k\ge 0$: {F-Cross-Barcode$_{k+1}(f,g)=\text{Barcode}_{k}(f)$} the standard barcode of the lower star filtration induced by $f$ on graph $\mathcal{G}$;
    \item for any value of threshold $\alpha$, the following sequence of natural linear maps of homology groups
\begin{multline}
    \xrightarrow{r_{3i+3}} H_{i}(C^{f\leq\alpha}(\mathcal{G})) \xrightarrow{r_{3i+2}} H_i(C^{\min(f,g)\leq\alpha}(\mathcal{G}))\xrightarrow{r_{3i+1}}\\ \xrightarrow{r_{3i+1}} H_i(C^{\tilde{f}\leq\alpha}(\tilde{\mathcal{G}})) \xrightarrow{r_{3i}} H_{i-1}(C^{f\leq\alpha}(\mathcal{G}))\xrightarrow{r_{3i-1}}\ldots\\\ldots\xrightarrow{r_1} H_{0}(C^{f\leq\alpha}(\mathcal{G}))\xrightarrow{r_0} H_{0}(C^{\min(f,g)\leq\alpha}(\mathcal{G}))\xrightarrow{r_{-1}}H_0(C^{\tilde{f}\leq\alpha}(\tilde{\mathcal{G}})) \xrightarrow{r_{-2}}0\label{eq:longseq}
\end{multline} is exact, i.e. for any $j$ the kernel of the map $r_{j}$ is the image of the map $r_{j+1}$.
\end{itemize}
\end{theorem}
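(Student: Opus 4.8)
The plan is to identify, for every threshold $\alpha$, the filtered complex $C^{\tilde{f}\le\alpha}(\tilde{\mathcal{G}})$ with the mapping cone of the natural inclusion $\iota_\alpha\colon C^{f\le\alpha}(\mathcal{G})\hookrightarrow C^{\min(f,g)\le\alpha}(\mathcal{G})$, and then read off all three bullets from the long exact sequence of this mapping cone. The inclusion $\iota_\alpha$ is well defined because $\min(f,g)\le f$ pointwise, so $\{i:f(i)\le\alpha\}\subseteq\{i:\min(f,g)(i)\le\alpha\}$, and the vertex identification $A_i\leftrightarrow A'_i$ sends a clique of $C^{f\le\alpha}(\mathcal{G})$ to a clique of $C^{\min(f,g)\le\alpha}(\mathcal{G})$.

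First I would analyze the three pieces of $\tilde{\mathcal{G}}$ separately. The subcomplex on $\{A_i\}$ is $C^{f\le\alpha}(\mathcal{G})$ and the subcomplex on $\{A'_i\}$ is $C^{\min(f,g)\le\alpha}(\mathcal{G})$, since $\tilde{f}(A_i)=f(i)$, $\tilde{f}(A'_i)=\min(f,g)(i)$, and both the $A$- and the $A'$-edges copy $\mathcal{E}$. Next I would check that the subcomplex on $\{A_i\}\cup\{A'_i\}$ is precisely the standard ordered triangulation of the prism $|K|\times[0,1]$, where $K$ is the clique complex of $\mathcal{G}$: the adjacency rule $A_i\sim A'_j\iff i\le j$ (the edges $(A_i,A'_j)$ for $i<j$ together with $(A_i,A'_i)$) is exactly that of the prism operator, and a short combinatorial check shows every clique $\{A_S,A'_T\}$ with $\max S\le\min T$ is a face of a prism simplex. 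Restricted to threshold $\alpha$, this prism becomes the mapping cylinder of $\iota_\alpha$. Finally, $O$ is adjacent only to the $A_i$'s and carries the global minimum value $\min_{f,g}$, so the subcomplex on $\{O\}\cup\{A_i\}$ is the cone on $C^{f\le\alpha}(\mathcal{G})$ with apex $O$, its cone simplices being present exactly when their bases lie in $C^{f\le\alpha}(\mathcal{G})$. Gluing this cone to the bottom copy of $C^{f\le\alpha}(\mathcal{G})$ inside the cylinder yields, simplicially and compatibly with $\alpha$, the mapping cone $\mathrm{MC}(\iota_\alpha)$.

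With this identification, the third bullet is the long exact sequence of the pair $\bigl(C^{\min(f,g)\le\alpha}(\mathcal{G}),C^{f\le\alpha}(\mathcal{G})\bigr)$, using $\tilde{H}_i(\mathrm{MC}(\iota_\alpha))\cong H_i\bigl(C^{\min(f,g)\le\alpha}(\mathcal{G}),C^{f\le\alpha}(\mathcal{G})\bigr)$: the maps $r_{3i+2},r_{3i+1},r_{3i}$ are respectively $\iota_\alpha$, the inclusion of $C^{\min(f,g)\le\alpha}(\mathcal{G})$ into $C^{\tilde{f}\le\alpha}(\tilde{\mathcal{G}})$, and the connecting homomorphism. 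The first bullet then follows by setting $g=f$: $\iota_\alpha$ becomes the identity, $\mathrm{MC}(\iota_\alpha)$ is a cone and hence contractible for every $\alpha$, so only the essential (infinite) $H_0$ bar survives and the F-Cross-Barcode is empty in every degree. The second bullet follows by setting $g\equiv\min_j f(j)$: then $C^{\min(f,g)\le\alpha}(\mathcal{G})$ equals the full complex $K$ for all relevant $\alpha$ and is constant in $\alpha$, so $H_*(K)$ contributes only essential bars; the connecting homomorphism in the pair sequence therefore restricts to an isomorphism between the finite part of $H_{k+1}(C^{\tilde{f}\le\alpha}(\tilde{\mathcal{G}}))$ and $H_k(C^{f\le\alpha}(\mathcal{G}))$, giving the degree-shifted equality with $\mathrm{Barcode}_k(f)$.

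The main obstacle is the simplicial identification in the second paragraph: verifying that the clique complex of $\tilde{\mathcal{G}}$ at each threshold is exactly $\mathrm{MC}(\iota_\alpha)$ requires checking both that the prism and cone simplices are precisely the ones listed and that their filtration values (maxima of $\tilde{f}$ over vertices) make them appear at the correct time, so that the identification is an isomorphism of filtered complexes rather than merely a homotopy equivalence at a single scale. A secondary but genuine point is the $H_0$ bookkeeping: the unreduced $H_0(C^{\tilde{f}\le\alpha}(\tilde{\mathcal{G}}))$ carries one extra class coming from the ever-present component of $O$, so the sequence as written is exact with the reduced $H_0$ at the $\tilde{\mathcal{G}}$ slots, consistent with the convention that the single essential $H_0$ bar is omitted from the F-Cross-Barcode.
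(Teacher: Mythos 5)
Your proposal takes essentially the same route as the paper: the paper's auxiliary proposition asserts exactly that the inclusions $C^{f\leq\alpha}(\mathcal{G})\subseteq C^{\min(f,g)\leq\alpha}(\mathcal{G})\subset C^{\tilde{f}\leq\alpha}(\tilde{\mathcal{G}})$ form a distinguished triangle --- i.e.\ that $C^{\tilde{f}\leq\alpha}(\tilde{\mathcal{G}})$ is, up to homotopy, the mapping cone of the first inclusion --- and the exact sequence is the long exact homology sequence of that triangle. Your explicit cone-plus-prism decomposition of $\tilde{\mathcal{G}}$, the derivation of the first two bullets from the resulting sequence, and the remark about reduced $H_0$ at the $\tilde{\mathcal{G}}$ slots fill in details that the paper delegates to a citation, but the underlying argument is the same.
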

\begin{proof}
    The proof of the first two properties is immediate from the definiton of F-Cross-Barcode$_{k}(f,g)$. The proof of the third property follows from the proposition \ref{prop:triangl}. 
\end{proof}

\begin{proposition} \label{prop:triangl}
 The embeddings of graphs $\mathcal{G}^{f\leq\alpha}\subseteq \mathcal{G}^{\min(f,g)\leq\alpha}\subset\tilde{\mathcal{G}}^{\tilde{f}\leq\alpha} $
give distinguished triangles, see \cite{gelfand2002methods}, in the homotopy category of chain complexes: \begin{equation}
     C^{{f}\leq\alpha}({\mathcal{G}})\to  C^{\min(f,g)\leq\alpha}(\mathcal{G})\to C^{\tilde{f}\leq\alpha}(\tilde{\mathcal{G}})\to C^{{f}\leq\alpha}({\mathcal{G}})[-1].
     \label{eq:triang1}
\end{equation} 
\begin{proof}
    The proof is parallel to the proof of proposition A.3 from \cite{barannikov2021representation}.
\end{proof}

\end{proposition}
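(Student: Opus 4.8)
The plan is to identify, at a fixed threshold $\alpha$, the doubled subcomplex $C^{\tilde f\leq\alpha}(\tilde{\mathcal G})$ with the mapping cone of the inclusion $\iota\colon C^{f\leq\alpha}(\mathcal G)\hookrightarrow C^{\min(f,g)\leq\alpha}(\mathcal G)$; the asserted triangle is then the defining triangle of that cone. First I would read off the anatomy of $\tilde{\mathcal G}^{\tilde f\leq\alpha}$. The primed vertices $A'_i$ carry the value $\min(f(i),g(i))$, so the subcomplex they span is canonically the embedded copy of $\mathcal G^{\min(f,g)\leq\alpha}$ realizing the second arrow of the triangle; the unprimed vertices $A_i$ carry $f(i)$, so they span a copy $U$ of $\mathcal G^{f\leq\alpha}$. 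I would then classify the mixed cliques: since a cross edge $(A_i,A'_j)$ exists only when $i=j$ or ($i<j$ and $(i,j)\in\mathcal E$), every simplex using both layers has the interleaved ``prism'' form $\{A_{i_0},\dots,A_{i_k},A'_{i_k},\dots,A'_{i_p}\}$ over an ordered clique $i_0<\dots<i_p$ of $\mathcal G$. The asymmetry $i<j$ in the edge rule is exactly what excludes non-prism mixed simplices, so the two layers together with the cross-cells form a combinatorial cylinder $Y$ over the inclusion.

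Next I would exhibit the retraction. The vertical edges $(A_i,A'_i)$ let one push each unprimed vertex $A_i$ onto its primed copy $A'_i$; I would check that collapsing a prism $\{A_{i_0},\dots,A_{i_k},A'_{i_k},\dots,A'_{i_p}\}$ this way lands in the primed layer (the image $\{A'_{i_0},\dots,A'_{i_p}\}$ is a simplex of $\mathcal G^{\min(f,g)\leq\alpha}$ because $f\leq\alpha$ implies $\min(f,g)\leq\alpha$), so this is a genuine deformation retraction of $Y$ onto $\mathcal G^{\min(f,g)\leq\alpha}$, under which the inclusion $U\hookrightarrow Y$ becomes precisely $\iota$. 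Finally I would treat $O$: since $\tilde f(O)=\min_{f,g}$ is the global minimum and $O$ is joined to every $A_i$, the simplices $\{O\}\cup\{A_{i_0},\dots,A_{i_p}\}$ form the contractible cone $O\ast U$ over the unprimed layer, entering the filtration at the same thresholds as their unprimed faces. Thus $\tilde{\mathcal G}^{\tilde f\leq\alpha}=Y\cup_U(O\ast U)$ is obtained from $Y$ by coning off the subcomplex $U$, i.e. it is the homotopy cofiber of $U\hookrightarrow Y$.

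Combining the two observations, $\tilde{\mathcal G}^{\tilde f\leq\alpha}$ is homotopy equivalent to the mapping cone of $\iota\colon\mathcal G^{f\leq\alpha}\to\mathcal G^{\min(f,g)\leq\alpha}$, and the composite $\mathcal G^{\min(f,g)\leq\alpha}\hookrightarrow Y\hookrightarrow\tilde{\mathcal G}^{\tilde f\leq\alpha}$ corresponds to the canonical map into that cone. Passing to chain complexes, the contractibility of $O\ast U$ and the retraction of $Y$ give an explicit chain homotopy equivalence $C^{\tilde f\leq\alpha}(\tilde{\mathcal G})\simeq\mathrm{Cone}(\iota)$ compatible with the layer inclusions, and the defining triangle of $\mathrm{Cone}(\iota)$ is the claimed one, with the shift $[-1]$ matching the homological convention; its long exact sequence is exactly \ref{eq:longseq}. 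The bookkeeping for these chain homotopies is identical to Proposition~A.3 of \cite{barannikov2021representation}, which I would cite rather than reprove. The main obstacle is the combinatorial core: verifying that the cross-edge rule produces exactly the prism cells with no extra or missing mixed simplices, that the vertical collapse is a well-defined deformation retraction at every $\alpha$ simultaneously, and that the cylinder $Y$ makes $U\hookrightarrow Y$ a cofibration so that the homotopy cofiber coincides with the mapping cone on the chain level. Once this geometric identification is secured, everything else -- contractibility of a cone, retraction of a cylinder, and coincidence of the cone sequence with \ref{eq:longseq} -- is formal.
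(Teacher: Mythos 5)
Your proposal is correct and takes essentially the same route as the paper, whose proof simply defers to the mapping-cone argument of Proposition~A.3 in \cite{barannikov2021representation}: your cylinder-plus-cone decomposition (prisms over the cross edges giving the mapping cylinder of $\iota$, and the star of $O$ coning off the unprimed layer) is exactly that argument adapted to the lower-star filtration. One minor imprecision: mixed cliques are \emph{faces} of the interleaved prisms rather than all being prisms themselves, but since the clique complex is determined by its maximal cliques this does not affect the argument.
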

\section{Connection to Representation Topology Divergence}

Representation Topology Divergence (RTD) \cite{barannikov2021representation} is a tool to compare two representations $R_1, R_2$ of some set of objects $\mathcal{V}$. RTD measures the dissimilarity in multi-scale topology
between two point clouds of equal size with a one-to-one correspondence.
Here we establish connections between SFTD and RTD and highlight differences.

A unified view of SFTD and RTD involves two elements:
\begin{enumerate}
    \item calculating a persistence barcode of a filtered simplicial complex derived from the double graph $\tilde{\mathcal{G}}$, see Fig. \ref{fig:doubled_graph};
    \item usage of a specific filtration function $T: \mathcal{C}(\tilde{\mathcal{G}}) \to \mathbb{R}$.
\end{enumerate}


Theoretically, the only restriction for $T$ is that $T(C_1) \le T(C_2)$ when $C_1 \subseteq C_2$.
For SFTD, the filtration function $T$ is defined in vertices by a vertex function $\tilde{f}$ (see Section \ref{sec:method}) and extended to the rest of simplices by a formula $T(C) = \max_{v \in C} \tilde{f}(v)$.
While for RTD, $T(C)$ conforms to the Vietoris-Rips filtration with distance-like weights and is defined on vertices and edges as follows:
\begin{gather*}
    T(A_i) = T(A_j) = T(O) = 0\nonumber \\
    T(A'_i, A'_j)=\min(\text{dist}(R_1(i), R_1(j)), \text{dist}(R_2(i), R_2(j))), \nonumber\\
    T(A_i,A'_j)=T(A_i,A_j)=\text{dist}(R_1(i), R_1(j)), \nonumber\\
    T(A_i,A'_i)=T(O,A_i)=0.\,\, \label{eq:matrO}
\end{gather*}
Here, for two objects $i, j \in \mathcal{V}$ we have their representations $R_1(i), R_2(i)$ and a distance function $\text{dist}(R_1(i), R_2(i))$. For the rest of the simplices, the filtration function is extended by a formula $T(C) = \max_{v_1, v_2 \in C, (v_1, v_2) \in \tilde{\mathcal{E}}} T((v_1, v_2))$.


\section{Computational complexity of SFTD}
\label{app:complexity}

The dominating step in SFTD computation is an evaluation of a persistence barcode.
Generally, it is at worst cubic in the number of cubes/simplexes involved. In practice, the computation is faster since the boundary matrix is typically sparse for real datasets. 
SFTD for $512^2$ or $64^3$ lattices can be calculated in seconds.

\section{Details on the experiment with SHAPR}

For experiments, we used the official repository\footnote{\url{https://github.com/marrlab/SHAPR_torch}}. We closely followed the official pipeline and trained with the loss:
$
\mathcal{L}_{SHAPR} + \lambda \cdot SFTD,$
where $\lambda$ was selected from the set $\{1.0, 0.5, 0.25, 0.125, 0.06\}$ by Dice error on cross-validation.
For baselines (SHAPR, SHAPR+W.D.) we used predictions from a zip-archive from the official repository.

\section{Details on the experiment with 3D segmentation}

For experiments, we used the repository \footnote{\url{https://github.com/Project-MONAI/research-contributions/tree/main/SwinUNETR/BRATS21}} with the implementation of Swin UNETR model pretrained on BraTS21 dataset of the MONAI Project framework \footnote{\url{https://github.com/Project-MONAI}}. The ground truth and predicted segmentations have three channels corresponding to NCR, ED, ET parts of tumor. In the experiments, we analyze and visualize the segmentations corresponding to each channel separately. To visualize the distinctive topological features provided by F-Cross-Barcode, we separately compute SFTD between ground truth and predicted segmentations and vice versa. Topological features provided by SFTD between the ground truth and prediction are depicted in the ground truth segmentation while SFTD between the prediction and ground truth are depicted in the predicted segmentation. In each example, the object's voxels have values of $1$ while the background's voxels have values of $0$. In the visualizations of the persistence barcodes and F-Cross-Barcodes, we omit the infinite half-line in $H_0$ for the ease of perception.


\section{Details on the experiment with graph Laplacian eigenvectors}

We used a slightly different definition of the F-Cross-Barcode, where the $\min(F,G)$ matrix in the Algorithm \ref{alg:sfd_barcodes} was computed slightly differently: 
$\min(F,G)$ was equal to an element-wise minimum of $F, G$, where the matrix $G$ was created in the same way for the function $g$ as the matrix $F$ for the function $f$.
This is somewhat more natural in case of functions defined on general graphs. For sufficiently smooth functions, that is, the values of the functions in nearby vertices differ no more that $\varepsilon$, this procedure gives the F-Cross-Barcode such that its bottleneck distance from the previous defined one doesn't exceed $\varepsilon$.

\section{Additional results for 3D segmentation}

In this section, we provide additional examples of topological errors between the ground truth and predicted 3D segmentations, see Figures \ref{fig:3d_segm_errors_ex1}, \ref{fig:3d_segm_errors_ex2}. The predicted segmentations in figures \ref{3d_segm_pred_ex1}, \ref{3d_segm_pred_ex2} have incorrect localization of clusters compared to the ground truth in figures \ref{3d_segm_label_ex1}, \ref{3d_segm_label_ex2}. In both cases, this difference is indistinguishable by persistence barcodes (see figures \ref{3d_segm_pers_barc_label_ex1}, \ref{3d_segm_pers_barc_pred_ex1}, \ref{3d_segm_pers_barc_label_ex2}, \ref{3d_segm_pers_barc_pred_ex2}) but is revealed by the F-Cross-Barcodes (see figures \ref{3d_segm_sfd_barc_ex1}, \ref{3d_segm_sfd_barc_ex2}).

\begin{figure}[tbp]
\centering
\begin{subfigure}{.32\linewidth}
  \centering
  \includegraphics[width=1\linewidth]{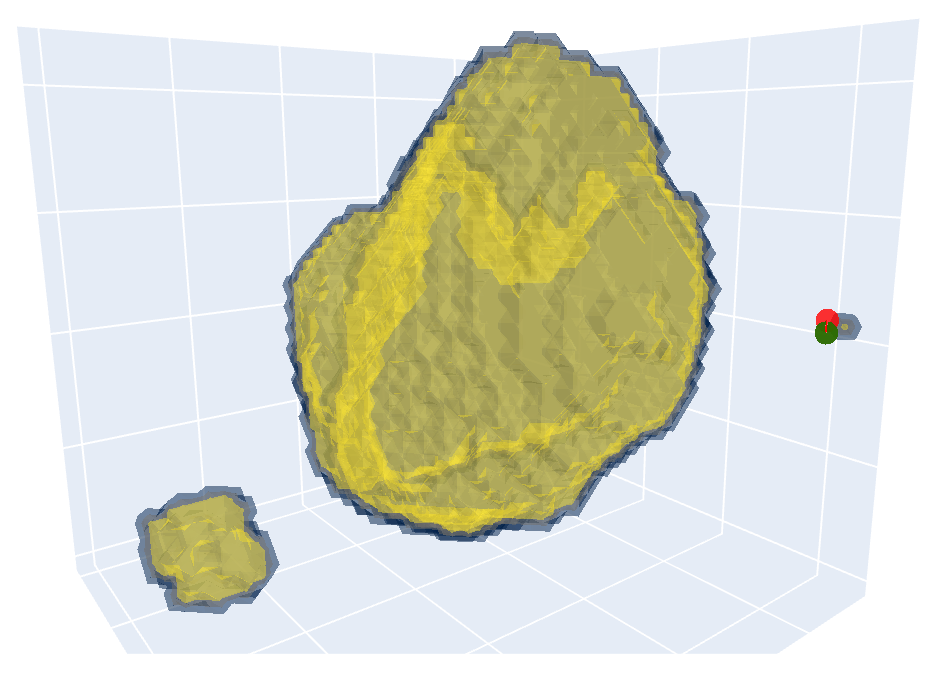}
  \caption{Ground truth}
  \label{3d_segm_label_ex1}
\end{subfigure}%
\hfill
\begin{subfigure}{.32\linewidth}
  \centering
  \includegraphics[width=1\linewidth]{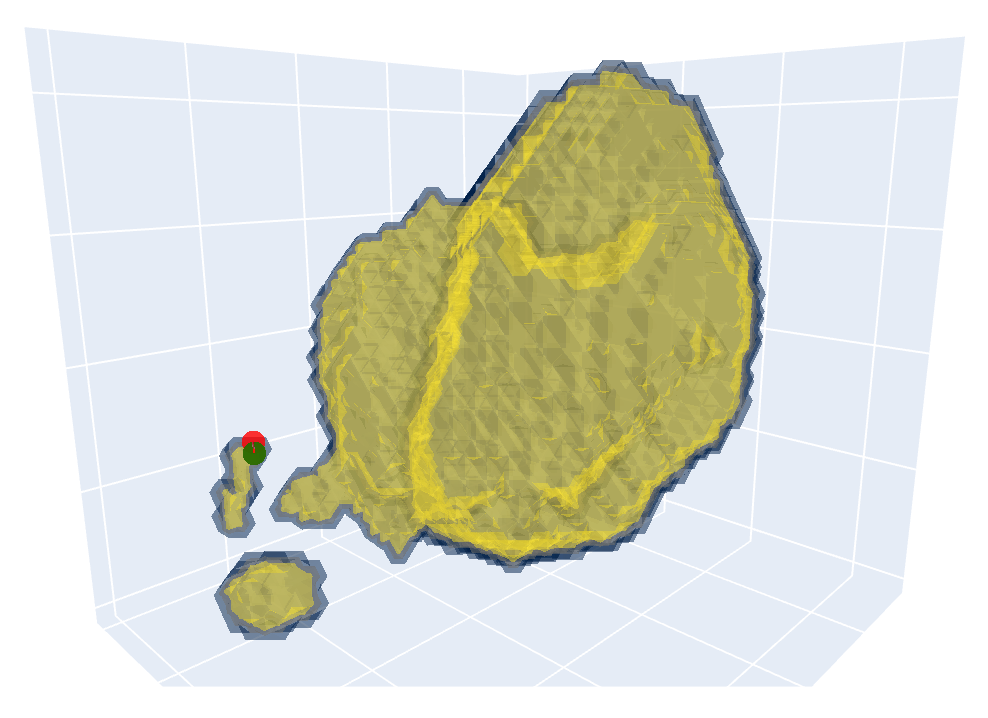}
  \caption{Prediction}
  \label{3d_segm_pred_ex1}
\end{subfigure}
\begin{subfigure}{.32\linewidth}
  \centering
  \includegraphics[width=1\linewidth]{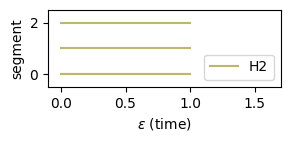}
  \caption{Pers. barc. of ground truth}
  \label{3d_segm_pers_barc_label_ex1}
\end{subfigure}%
\hfill
\begin{subfigure}{.32\linewidth}
  \centering
  \includegraphics[width=1\linewidth]{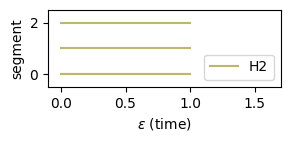}
  \caption{Pers. barc. of prediction}
  \label{3d_segm_pers_barc_pred_ex1}
\end{subfigure}
\hfill
\begin{subfigure}{.32\linewidth}
  \centering
  \includegraphics[width=1\linewidth]{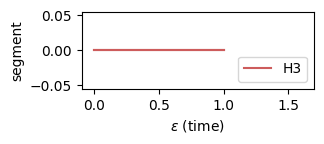}
  \caption{F-Cross-Barcode}
  \label{3d_segm_sfd_barc_ex1}
\end{subfigure}

\caption{Examples of topological errors in 3D segmentation. Predicted segmentation (b) has incorrect localization of one cluster compared to the ground truth (a). This difference is indistinguishable by persistence barcodes (c), (d) but is revealed by F-Cross-Barcode (e).}
\label{fig:3d_segm_errors_ex1}
\end{figure}

\begin{figure}[tbp]
\centering
\begin{subfigure}{.32\linewidth}
  \centering
  \includegraphics[width=1\linewidth]{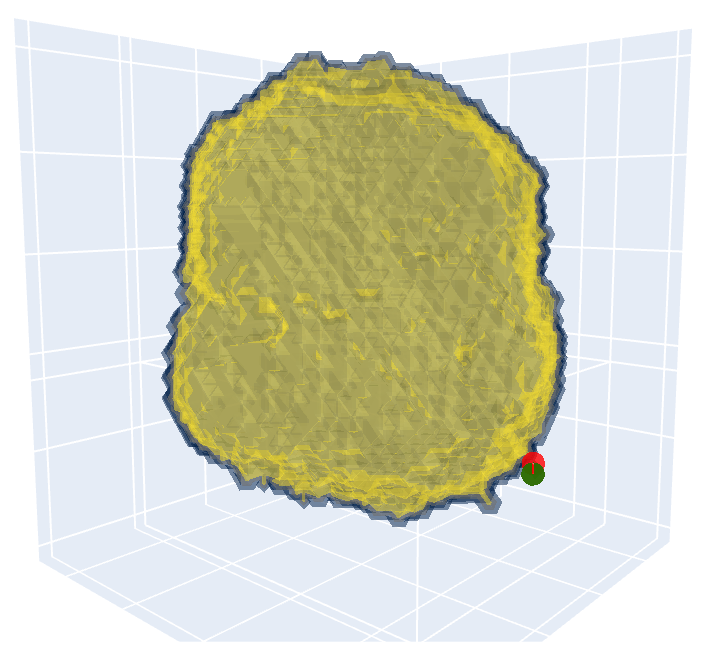}
  \caption{Ground truth}
  \label{3d_segm_label_ex2}
\end{subfigure}%
\hfill
\begin{subfigure}{.32\linewidth}
  \centering
  \includegraphics[width=1\linewidth]{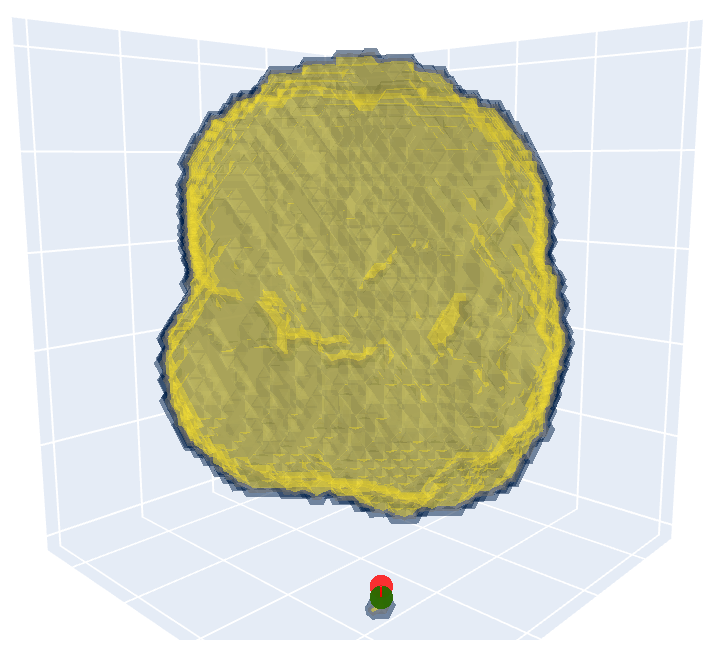}
  \caption{Prediction}
  \label{3d_segm_pred_ex2}
\end{subfigure}
\begin{subfigure}{.32\linewidth}
  \centering
  \includegraphics[width=1\linewidth]{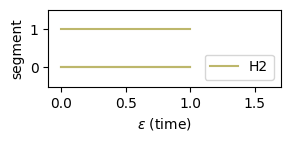}
  \caption{Pers. barc. of ground truth}
  \label{3d_segm_pers_barc_label_ex2}
\end{subfigure}%
\hfill
\begin{subfigure}{.32\linewidth}
  \centering
  \includegraphics[width=1\linewidth]{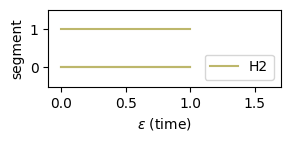}
  \caption{Pers. barc. of prediction}
  \label{3d_segm_pers_barc_pred_ex2}
\end{subfigure}
\hfill
\begin{subfigure}{.32\linewidth}
  \centering
  \includegraphics[width=1\linewidth]{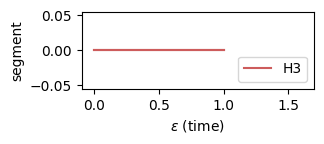}
  \caption{F-Cross-Barcode}
  \label{3d_segm_sfd_barc_ex2}
\end{subfigure}

\caption{Examples of topological errors in 3D segmentation. The ground truth segmentation (a) has a tiny cluster that is incorrectly predicted by the model (b). This difference is indistinguishable by persistence barcodes (c), (d) but is revealed by F-Cross-Barcode (e).}
\label{fig:3d_segm_errors_ex2}
\end{figure}

\section{Additional experiments for 2D segmentation}
\label{app:2d_segmentation}

In 2D segmentation, optimizing loss functions based on pixel-overlap is often insufficient to capture the correct topology of segmented objects. Betti matching \cite{stucki2023topologically} is a topological metric and loss function for supervised image segmentation. When used as a loss function, Betti matching improves topological performance of segmentation models while preserving the volumetric quality.

We strictly followed the experimental setup from \cite{stucki2023topologically} to compare SFTD with Betti matching loss on CREMI \cite{funke2018large} and colon cancer cell (Colon) \cite{carpenter2006cellprofiler} datasets. We replaced the topological loss term in Betti matching with SFTD with $p=2$ in up to $2$ dimensions. Table \ref{tbl:2d_segm} provides experimental results. Our method has similar segmentation quality (Dice, clDice, accuracy) and lower topological errors ($\tau^{\text{err}}$, $\beta^{\text{err}}$) - up to 35\%. Also, evaluation of SFTD loss is $\sim 35$ times faster than Betti matching loss.

\begin{table*}[hp!]
\caption{Errors of 2D segmentation for CREMI and Colon datasets.}
\label{tbl:2d_segm}
\begin{center}
\begin{small}
\begin{sc}
\begin{tabular}{lccccc}
\toprule
Method & $\text{Dice}$\,$\uparrow$ & $\text{clDice}$\,$\uparrow$ & $\text{Acc.}$\,$\uparrow$ & $\tau^{\text{err}}$\,$\downarrow$ & $\beta^{\text{err}}$\,$\downarrow$\\
\midrule
\multicolumn{6}{c}{CREMI dataset \cite{funke2018large}} \\ \midrule
Betti matching \cite{stucki2023topologically}            & \textbf{0.893} & \textbf{0.941} & \textbf{0.959} & $129.80$ & $79.16$ \\
SFTD & $0.885$ & \textbf{0.941} & $0.955$ & \textbf{126.48} & \textbf{62.64} \\ \midrule
\multicolumn{6}{c}{Colon dataset \cite{carpenter2006cellprofiler}} \\ \midrule
Betti matching \cite{stucki2023topologically} & \textbf{0.907} & $0.871$ & \textbf{0.975} & $32.00$ & $21.50$ \\
SFTD & $0.901$ & \textbf{0.889} & $0.971$ & \textbf{21.50} & \textbf{14.00} \\
\bottomrule
\end{tabular}
\end{sc}
\end{small}
\end{center}
\end{table*}

\end{document}